\documentclass{article}

\usepackage[preprint,nonatbib]{neurips}
\usepackage{mathptmx}
\usepackage{amsmath,amssymb,amsthm,mathtools}
\usepackage{stmaryrd}
\usepackage{tikz-cd}
\usepackage{booktabs}
\usepackage{makecell}   
\usepackage{array}      
\usepackage{enumitem}
\usepackage[dvipsnames]{xcolor}
\usepackage{graphicx}
\usepackage{caption}
\usepackage{listings}
\newsavebox{\stepcodebox}
\usepackage{csquotes}
\usepackage{microtype}
\usepackage[round,authoryear]{natbib}
\usepackage[bookmarks=true]{hyperref}
\usepackage[capitalise]{cleveref}

\hypersetup{
  colorlinks=true,
  linkcolor=blue!50!black,
  citecolor=blue!50!black,
  urlcolor=blue!50!black
}

\graphicspath{{figures/}}
\input{macros/tikzfig.sty}

\tikzstyle{dot}=[inner sep=1pt,minimum width=2mm,minimum height=2mm,draw, shape=circle]
\tikzstyle{bdot}=[inner sep=1pt,minimum width=0.5mm,minimum height=0.5mm,draw=blue,shape=circle,fill=blue, line width=0.6pt]
\tikzstyle{rdot}=[inner sep=1pt,minimum width=0.5mm,minimum height=0.5mm,draw=red,shape=circle,fill=magenta, line width=0.6pt]
\tikzstyle{ydot}=[inner sep=1pt,minimum width=0.5mm,minimum height=0.5mm,draw=orange,shape=circle,fill=yellow, line width=0.6pt]
\tikzstyle{gdot}=[inner sep=1pt,minimum width=0.5mm,minimum height=0.5mm,draw=gray, shape=circle, fill=gray]
\tikzstyle{graydot}=[inner sep=1pt,minimum width=0.5mm,minimum height=0.5mm,draw=gray, shape=circle]
\tikzstyle{cyandot}=[inner sep=1pt,minimum width=0.5mm,minimum height=0.5mm,draw=cyan,shape=circle,fill=cyan, line width=0.6pt]
\tikzstyle{wdot}=[inner sep=1pt,minimum width=2mm,minimum height=2mm,draw, shape=circle, fill=white]
\tikzstyle{Bdot}=[inner sep=1pt,minimum width=2mm,minimum height=2mm,draw, shape=circle, fill=black]

\tikzstyle{K}=[-, line width=1pt]
\tikzstyle{H}=[-, style=dashed]
\tikzstyle{D}=[-, style=dotted]
\tikzstyle{W}=[-, draw=white, fill=white]
\tikzstyle{nB}=[-, draw=blue, line width=0.6pt]
\tikzstyle{nO}=[-, draw=orange, line width=0.6pt]
\tikzstyle{nR}=[-, draw=red, line width=0.6pt]
\tikzstyle{ng}=[-, draw=gray]
\tikzstyle{ngg}=[-, draw=gray, line width=0.6pt]
\tikzstyle{nC}=[-, draw=cyan, line width=0.6pt]
\tikzstyle{nM}=[-, draw=magenta, line width=0.6pt]
\tikzstyle{nY}=[-, draw=yellow, line width=0.6pt]
\tikzstyle{fB}=[-, draw=cyan, fill=cyan, fill opacity=0.25, opacity=0.25]
\tikzstyle{fO}=[-, draw=yellow, fill=yellow, fill opacity=0.35, opacity=0.35]
\tikzstyle{fR}=[-, draw=magenta, fill=magenta, fill opacity=0.25, opacity=0.25]
\tikzstyle{fg}=[-, draw=gray, fill=gray, fill opacity=0.3, opacity=0.3]
\tikzstyle{fBB}=[-, draw=blue, fill=cyan, fill opacity=0.3]
\tikzstyle{fOO}=[-, draw=orange, fill=yellow, fill opacity=0.3]
\tikzstyle{fRR}=[-, draw=red, fill=magenta, fill opacity=0.3]
\tikzstyle{fgg}=[-, draw=gray, fill=gray, fill opacity=0.3]
\tikzstyle{HBB}=[-, style=dashed, draw=blue, fill=cyan, fill opacity=0.3]
\tikzstyle{HOO}=[-, style=dashed, draw=orange, fill=yellow, fill opacity=0.3]
\tikzstyle{HRR}=[-, style=dashed, draw=red, fill=magenta, fill opacity=0.3]
\tikzstyle{Hgg}=[-, style=dashed, draw=gray, fill=gray, fill opacity=0.3]
\tikzstyle{HB}=[-, draw=blue, style=dashed, line width=0.6pt]
\tikzstyle{HO}=[-, draw=orange, style=dashed, line width=0.6pt]
\tikzstyle{HR}=[-, draw=red, style=dashed, line width=0.6pt]
\tikzstyle{Hg}=[-, draw=gray, style=dashed]

\tikzstyle{hardfillL}=[-, draw=black, fill=lime]
\tikzstyle{hardfillY}=[-, draw=black, fill=yellow]
\tikzstyle{hardfillO}=[-, draw=black, fill=orange]
\tikzstyle{dB}=[-, draw=blue, style=densely dotted, line width=0.6pt]
\tikzstyle{dO}=[-, draw=orange, style=densely dotted, line width=0.6pt]
\tikzstyle{dC}=[-, draw=cyan, style=densely dotted, line width=0.6pt]
\tikzstyle{dR}=[-, draw=red, style=densely dotted, line width=0.6pt]
\tikzstyle{dG}=[-, draw=ForestGreen, style=densely dotted, line width=0.6pt]

\tikzstyle{v}=[->]

\tikzstyle{softline}=[-, opacity=0.3]

\tikzstyle{greendot}=[inner sep=1pt,minimum width=0.5mm,minimum height=0.5mm,draw=teal,shape=circle,fill=green, line width=0.6pt]
\tikzstyle{purpledot}=[inner sep=1pt,minimum width=0.5mm,minimum height=0.5mm,draw=violet,shape=circle,fill=blue, line width=0.6pt]
\tikzstyle{browndot}=[inner sep=1pt,minimum width=0.5mm,minimum height=0.5mm,draw=orange,shape=circle,fill=orange, line width=0.6pt]
\tikzstyle{tealdot}=[inner sep=1pt,minimum width=0.5mm,minimum height=0.5mm,draw=ForestGreen,shape=circle,fill=ForestGreen, line width=0.6pt]

\tikzstyle{nGreen}=[-, draw=teal, line width=0.6pt]
\tikzstyle{nPurple}=[-, draw=violet, line width=0.6pt]
\tikzstyle{nBrown}=[-, draw=orange, line width=0.6pt]
\tikzstyle{nTeal}=[-, draw=ForestGreen, line width=0.6pt]

\tikzstyle{fGreen}=[-, draw=green, fill=green, fill opacity=0.3, opacity=0.3]
\tikzstyle{fPurple}=[-, draw=violet, fill=blue, fill opacity=0.3, opacity=0.3]
\tikzstyle{fBrown}=[-, draw=orange, fill=orange, fill opacity=0.3, opacity=0.3]
\tikzstyle{fTeal}=[-, draw=ForestGreen, fill=ForestGreen, fill opacity=0.7, opacity=0.7]

\tikzstyle{dbdot}=[inner sep=1pt,minimum width=0.5mm,minimum height=0.5mm,draw=blue,shape=circle,fill=cyan, line width=0.6pt]
\tikzstyle{drdot}=[inner sep=1pt,minimum width=0.5mm,minimum height=0.5mm,draw=red,shape=circle,fill=magenta, line width=0.6pt]
\tikzstyle{dydot}=[inner sep=1pt,minimum width=0.5mm,minimum height=0.5mm,draw=orange,shape=circle,fill=yellow, line width=0.6pt]
\tikzstyle{dgdot}=[inner sep=1pt,minimum width=0.5mm,minimum height=0.5mm,draw=gray, shape=circle, fill=gray]

\tikzstyle{dgreendot}=[inner sep=1pt,minimum width=0.5mm,minimum height=0.5mm,draw=teal,shape=circle,fill=green, line width=0.6pt]
\tikzstyle{dpurpledot}=[inner sep=1pt,minimum width=0.5mm,minimum height=0.5mm,draw=violet,shape=circle,fill=blue, line width=0.6pt]
\tikzstyle{dbrowndot}=[inner sep=1pt,minimum width=0.5mm,minimum height=0.5mm,draw=orange,shape=circle,fill=orange, line width=0.6pt]
\tikzstyle{dtealdot}=[inner sep=1pt,minimum width=0.5mm,minimum height=0.5mm,draw=Aquamarine,shape=circle,fill=Aquamarine, line width=0.6pt]

\tikzstyle{dorangedot}=[inner sep=1pt,minimum width=0.5mm,minimum height=0.5mm,draw=Maroon,shape=circle,fill=Maroon, line width=0.6pt]
\tikzstyle{dmaroondot}=[inner sep=1pt,minimum width=0.5mm,minimum height=0.5mm,draw=Maroon,shape=circle,fill=Maroon, line width=0.6pt]

\tikzstyle{ndB}=[-, draw=blue, line width=0.6pt]
\tikzstyle{ndO}=[-, draw=orange, line width=0.6pt]
\tikzstyle{ndR}=[-, draw=red, line width=0.6pt]
\tikzstyle{ndg}=[-, draw=gray]

\tikzstyle{ndGreen}=[-, draw=teal, line width=0.6pt]
\tikzstyle{ndPurple}=[-, draw=violet, line width=0.6pt]
\tikzstyle{ndBrown}=[-, draw=orange, line width=0.6pt]
\tikzstyle{ndTeal}=[-, draw=Aquamarine, line width=0.6pt]

\tikzstyle{ndorange}=[-, draw=Maroon, line width=0.6pt]
\tikzstyle{ndmaroon}=[-, draw=Maroon, line width=0.6pt]

\tikzstyle{fdB}=[-, draw=cyan, fill=cyan, fill opacity=0.7, opacity=0.7]
\tikzstyle{fdO}=[-, draw=yellow, fill=yellow, fill opacity=0.8, opacity=0.8]
\tikzstyle{fdR}=[-, draw=magenta, fill=magenta, fill opacity=0.7, opacity=0.7]
\tikzstyle{fdg}=[-, draw=darkgray, fill=darkgray, fill opacity=0.8, opacity=0.8]

\tikzstyle{fdGreen}=[-, draw=green, fill=green, fill opacity=0.7, opacity=0.7]
\tikzstyle{fdPurple}=[-, draw=violet, fill=blue, fill opacity=0.7, opacity=0.7]
\tikzstyle{fdBrown}=[-, draw=orange, fill=orange, fill opacity=0.8, opacity=0.8]
\tikzstyle{fdTeal}=[-, draw=Aquamarine, fill=Aquamarine, fill opacity=0.7, opacity=0.7]

\tikzstyle{fdorange}=[-, draw=RedOrange, fill=RedOrange, fill opacity=0.7, opacity=0.7]
\tikzstyle{fdmaroon}=[-, draw=OrangeRed, fill=OrangeRed, fill opacity=0.7, opacity=0.7]

\tikzstyle{fBlue}=[-, draw=blue, fill=blue, fill opacity=0.7, opacity=0.7]

\newcommand{\sidefigL}[3][0.45]{%
  \par\medskip\noindent%
  \begin{minipage}[c]{#1\linewidth}\centering #2\end{minipage}\hfill%
  \begin{minipage}[c]{\dimexpr 0.97\linewidth - #1\linewidth\relax}#3\end{minipage}%
  \par\medskip%
}
\newcommand{\sidefigR}[3][0.45]{%
  \par\medskip\noindent%
  \begin{minipage}[c]{\dimexpr 0.97\linewidth - #1\linewidth\relax}#2\end{minipage}\hfill%
  \begin{minipage}[c]{#1\linewidth}\centering #3\end{minipage}%
  \par\medskip%
}

\DeclareTextFontCommand{\texttt}{\ttfamily\upshape}

\newcommand{\FinSet}{\mathsf{FinSet}}

\newcommand{\Met}{\mathsf{Met}}

\newcommand{\Disp}{\mathsf{Disp}}
\newcommand{\Euc}{\mathsf{Euc}}
\newcommand{\op}{\mathrm{op}}
\newcommand{\ord}{\mathrm{ord}}
\newcommand{\id}{\mathrm{id}}

\newcommand{\FinPart}{\mathsf{FinPart}}

\theoremstyle{definition}
\newtheorem{definition}{Definition}[section]
\newtheorem{example}[definition]{Example}
\newtheorem{remark}[definition]{Remark}

\theoremstyle{plain}
\newtheorem{proposition}[definition]{Proposition}
\newtheorem{lemma}[definition]{Lemma}
\newtheorem{corollary}[definition]{Corollary}
\newtheorem{theorem}[definition]{Theorem}

\title{Graphical \texttt{einops}:\\
  bridging tensor networks and computation graphs}
\author{%
  Vincent Wang-Ma\'{s}cianica\thanks{Laboratory for Human-Centered AI, Department of Philosophy, University of Oxford. \texttt{vincent.wang-mascianica@philosophy.ox.ac.uk}}
  \And
  Nikhil Khatri\thanks{Machine Learning Research Group, Department of Engineering Science, University of Oxford \texttt{nikhil@robots.ox.ac.uk}}
}
\date{}

\begin{document}
\makeatletter
\renewcommand{\@notice}{}
\makeatother
\setcounter{footnote}{1}
\maketitle
\setcounter{footnote}{0}

\begin{abstract}
Architecture diagrams are ubiquitous in deep learning, but they are usually only representational: the tensor-program identities they suggest are still proved by prose and tensor-axis manipulation. We introduce a formal graphical calculus for the structural fragment of tensor programming underlying \texttt{einops}, making such diagrams proof-enabling. Our calculus represents tensor axes as nested graded tubes around a base type. The tube boundary recovers the undirected tensor-network view of axes, while the directed interior retains the operational reading of computation graphs. The key rewrite is grade-naturality: sliding spectacles over tubes. Standard equivariance proofs become short diagrammatic derivations. We additionally demonstrate how our rewrite system may be applied to convert attention masks into pre-processing operations, recovering efficient implementations of sparse attention blocks.
\end{abstract}

\section{Introduction}
\label{sec:intro}

Tensor-program identities --- ``transpose then reshape equals reshape then transpose'', ``permuting an attention layer's queries permutes its outputs'', ``masked attention equals augmented unmasked attention'' --- are routine in deep learning but proved largely by hand: index manipulation, prose, or appeal to operational intuition. The de facto readable syntax for the structural layer (axis reshapes, broadcasts, contractions, packing) is \texttt{einops}: a procedural surface that disambiguates index choreography but supplies no proof system. We give that layer a graphical proof calculus, where \texttt{einops}-level diagrams are typed proof terms, the rewrite engine is a single naturality square, and structural equality is decidable.

\begin{center}
\begin{tikzpicture}
\node[draw=none, fill=gray, fill opacity=0.1, text opacity=1, rounded corners=4pt, inner sep=10pt, text width=0.94\linewidth] {%
\textbf{Not just notation.}\label{thm:decidable} A well-typed tube diagram denotes a finite read map on tensor positions together with labelled pointwise base maps. In the pointwise SIMD fragment, equality is decidable: slide pointwise boxes outward to normal form, compose the remaining tube-boundary partial functions, and compare on indices. Tube-in-tube-out boxes (contractions, attention) act as landmarks crossable only by declared rules: equivariance and invariance, padding/algebra coherences, augment/pad equivalence. Decidability extends to the fragment plus any finite declared rule set.%
};
\end{tikzpicture}
\end{center}

The reason such a calculus has not appeared earlier is mathematical, not stylistic. Tensor data has an undirected, topological syntax, whereas computation has a directed left-to-right flow. The two views correspond to graphical traditions that disagree on what parallel composition should mean: $\otimes$ or $\oplus$.

The undirected tradition (Penrose tensor diagrams \citep{penrose1971applications}, named-tensor notation \citep{chiang2023named}, and ZX-calculus and its quantum-process descendants \citep{coecke2010interacting}) treats wires as tensor indices and reads diagrams up to free deformation. Pavlovic, Coecke, and Vicary \citep{pavlovic2013new} show that the underlying primitive on every wire type is a special dagger Frobenius algebra (SDFA), and that an SDFA structure on a Hilbert space is equivalent to a choice of orthonormal basis. Among other consequences, parallel composition of diagrams is forced to be the Kronecker product of vector spaces $\otimes$, and the rewrites available bind to that interpretation. Operational structure (concatenation, sequence axes, batching) sits awkwardly here: named-tensor papers explicitly mark pack/unpack-like operations as out of scope, and nonlinear functions are difficult to address formally. Recent DL-flavoured offerings adapt undirected tensor-network diagrams for ML audiences. Taylor's expository diagrams \citep{taylor2024graphical} and Ahle's \texttt{tensorgrad} \citep{ahle_tensorgrad} are notable examples; the latter performs the simplifications of the \emph{Tensor Cookbook} symbolically and lowers diagrams to PyTorch with full higher-order-derivative support. Both treat the diagram chiefly as a frontend for symbolic manipulation and compiler lowering rather than as a proof calculus. At an adjacent abstraction layer, \citet{laue2020simple} give a formal tensor calculus for evaluating and differentiating Einstein-notation contractions: a calculus about derivatives of contractions rather than about the index-and-axis structure of the surrounding tensor program.

The directed tradition (computation graphs, architecture flowcharts) imposes a temporal reading order: operations are boxes, data are wires connecting past outputs to sequentially-composed future inputs. When all such directed data admit copying and deleting maps, Fox's theorem \citep{fox1976coalgebras} forces the meaning of parallel composition to be the categorical product: when we deal with functions between Euclidean spaces in the setting of Deep Learning, parallel composition must mean direct sum. Recent DL-side graphical work in this tradition formalises the layout with explicit SIMD-style boxes for parallel operations along tensor axes: neural circuit diagrams \citep{abbott2024ncd} and the ``Anatomy of Attention'' framework \citep{khatri2024anatomy} are the closest precedents to our calculus and inspired the wires-and-tubes layer used here.

\texttt{einops} \citep{rogozhnikov2022einops} is the practitioner-side bridge between the two views. It is a procedural language for tensor manipulation (\texttt{rearrange}, \texttt{repeat}, \texttt{reduce}, \texttt{pack}, \texttt{unpack}) that sits in the directed tradition, so by Fox its data carries copy and delete. Via the display-map correspondence, each \emph{structural} move determines a finite read map; reductions additionally require a chosen algebra on values. We formalise this bridge as a graphical calculus by wrapping wires in tubes that carry dimension data; nested tubes give multi-axis tensors, partial-function reads on a dense surface handle pack/unpack, and the upper boundary of every tube recovers the undirected tensor-network reading of its axes (\Cref{sec:tubeintro}). The paper's theorem bundle is: (A) structural \texttt{einops} programs embed in a \(\FinSet^{\op}\)-graded tube calculus; (B) equality in the structural fragment reduces to equality of composite functions between finite sets; (C) base morphisms acting pointwise along a grade satisfy a grade-naturality rewrite, \emph{sliding spectacles}; and (D) every attention mask induces a query-indexed display of visible K/V fibres, so masked attention is fibrewise unmasked attention over that display. Causal masking, padding masks, and sliding-window masks are special cases.

\providecommand{\cmrk}{\ensuremath{\checkmark}}
\providecommand{\xmrk}{\ensuremath{\times}}
\providecommand{\pmrk}{\ensuremath{\circ}}
\providecommand{\citelink}[2]{\hyperlink{cite.#2}{#1}}

\begin{table}[!ht]
\caption{\textbf{Calculus comparison.} Property summary of the present calculus against neighbouring graphical, named-tensor, and DSL approaches. \cmrk{} = primary feature; \pmrk{} = supported indirectly or only on a fragment; \xmrk{} = unsupported; ``n/a'' = not applicable to the system's modality. Each system name is a clickable link to its bibliography entry.}
\label{tab:relatedwork}
\centering
\small
\setlength{\tabcolsep}{5pt}
\renewcommand{\arraystretch}{1.2}
\begin{tabular}{@{}lccccccc@{}}
\toprule
\textbf{System} & \makecell{\textbf{Directed}\\\textbf{flow}} & \makecell{\textbf{Axis}\\\textbf{semantics}} & \makecell{\textbf{Copy /}\\\textbf{delete}} & \makecell{\textbf{Pack /}\\\textbf{padding}} & \makecell{\textbf{Proof}\\\textbf{rewrites}} & \makecell{\textbf{Decid.}\\\textbf{struct. eq.}} & \makecell{\textbf{DL}\\\textbf{examples}} \\
\midrule
\citelink{Penrose / TN}{penrose1971applications}
  & \xmrk & free idx. & \pmrk$^a$ & \xmrk & topo+Frob.$^b$ & \pmrk & \pmrk \\
\citelink{Named tensors}{chiang2023named}
  & n/a & named axes & \pmrk & \xmrk$^c$ & \xmrk & \xmrk & \cmrk \\
\citelink{\texttt{tensorgrad}}{ahle_tensorgrad}
  & \xmrk & indexed & \cmrk & \xmrk & symbolic & \pmrk$^d$ & \cmrk \\
\citelink{NCD}{abbott2024ncd}
  & \cmrk & SIMD-named & \cmrk & \xmrk & informal & \xmrk & \cmrk \\
\citelink{Anatomy of Att.}{khatri2024anatomy}
  & \cmrk & SIMD-named & \cmrk & \xmrk & \xmrk & \xmrk & \cmrk \\
\citelink{\texttt{einops}}{rogozhnikov2022einops}
  & n/a$^e$ & patterns & \pmrk & \cmrk & \xmrk & \xmrk & \cmrk \\
\midrule
\textbf{This paper}
  & \cmrk & $\FinSet^{\op}$-graded & \cmrk & \cmrk & grade-nat. & \cmrk$^f$ & \cmrk \\
\bottomrule
\end{tabular}
\\[0.4em]
\raggedright
{\scriptsize $^a$Special-Frobenius copy on the SDFA fragment, distinct from cartesian copy. \quad $^b$Topological deformation plus Frobenius/special-Frobenius rewrites. \quad $^c$Marked out-of-scope by the named-tensor authors. \quad $^d$Symbolic simplification within the tensor-cookbook fragment, not a global decision procedure. \quad $^e$Procedural DSL, not a graphical formalism. \quad $^f$Decidable on the structural fragment (\Cref{thm:decidable}); reductions and softmax interact with declared base algebras and lie outside this guarantee.}
\end{table}
\Cref{tab:relatedwork} situates our work among neighbouring approaches. Unlike prior diagram systems for machine learning, which primarily serve as notation, exposition, or compiler frontends, this calculus provides a formal equality theory for the einops-level structural fragment, together with a normalisation procedure.

\section{Background: building the calculus}
\label{sec:background}

\sidefigR[0.3]{The directed flowchart tradition reads a diagram temporally: operations are boxes, data are wires, and a diagram drawn left-to-right is read in evaluation order. Informal flowcharts in DL papers often differ up to a Poincar{\'e} dual (treating data as boxes and processes as wires) or mix conventions altogether. The string-diagram convention is the formally sound version: Joyal and Street \citep{joyalstreet1991} prove that the desired topological manipulations  (sliding boxes along wires, bending wires, redrawing without changing meaning) agree exactly with the algebraic semantics of symmetric monoidal categories. The interchange law (right) is the headline example: the four parallel-and-sequential arrangements of two boxes $f, g, h, k$ all compute the same morphism. This section walks pedagogically from string-diagram primitives to a calculus that captures \texttt{einops}. The path: a wire denotes $\mathbb{R}$; copy, delete, swap generate every (partial) function on indices via the display-map correspondence; tubes wrap parallel wires; nesting handles multi-axis tensors; \texttt{reshape}, \texttt{rearrange}, \texttt{cat}, \texttt{split} are index bijections; \texttt{pack}/\texttt{unpack} reduces to padding; \texttt{repeat}, \texttt{reduce}, dot product, and \texttt{einsum} follow. The summary table is \Cref{tab:primer}.}{\scalebox{1}{\tikzfig{stringdiagrams/interchange2}}}

\subsection{From maps to diagrams}
\label{sec:displaymaps}

We start with a single wire type. A black wire denotes the field $\mathbb{R}$; $n$ wires in parallel denote $\mathbb{R}^n$. Three structural maps recur in tensor work: \textbf{swap} ($\mathbb{R}^2\to\mathbb{R}^2$, $(a,b)\mapsto(b,a)$, transpose two indices), \textbf{copy} ($\mathbb{R}\to\mathbb{R}^2$, $a\mapsto(a,a)$, broadcast), and \textbf{delete} ($\mathbb{R}\to\mathbf{1}$, $a\mapsto\star$, discard).

\sidefigL{\scalebox{1}{\tikzfig{stringdiagrams/comonoidlaws}}}{The diagrammatic forms of the comonoid axioms say that copy is coassociative (order-independent), cocommutative (branch-order-independent), and counital (unital with respect to delete). All functions are naturally copyable and deletable. These axioms are the entire algebraic content of working with copy and delete, and they are exactly the topological identities that you would intuitively expect of a fork-and-discard structure. The choice to interpret parallel composition as direct sum is what gives us copy and delete in the first place: Fox's theorem \citep{fox1976coalgebras} shows that a symmetric monoidal category in which the tensor product is the categorical product is exactly one in which every object carries a natural comonoid (copy, delete) compatible with the structural maps.}

\sidefigR[0.4]{\textbf{Display-map correspondence.} Tag each input wire with an index, follow the diagram, and read off where each output index lands, going \emph{backwards}. Swap alone gives bijections; adding delete gives injections; adding copy gives every total function between finite sets. Allowing a wire to be \emph{padded} (some output positions have no source and are filled by an external value) extends the correspondence to every partial function. We return to padding once ragged data is on the table.}{\scalebox{1}{\tikzfig{displaymaps/comonoidfnsvert}}}

\vspace{-1em}
\subsection{Tubes wrap parallel wires}
\label{sec:tubeintro}

Drawing $\mathbb{R}^{37}$ as 37 parallel wires would be cumbersome. We compress the bundle into a single coloured tube of grade $37$ carrying one wire; the tube means \emph{direct-sum my contents 37 times}. Two helpful conventions: a tube of grade 1 is invisible (it pops in and out of existence freely), and a tube of grade 0 does not appear at all (neither the tube nor its contents).
\[
\scalebox{0.9}{\tikzfig{matmuls/pita}}\quad\scalebox{0.9}{\tikzfig{matmuls/pitamul}}\quad\scalebox{0.9}{\tikzfig{matmuls/SIMD}}
\]
\textbf{(left)} $k$ parallel wires compress into one tube of grade $k$. \textbf{(middle)} A matrix-shaped bundle of operations from $n$ wires to $m$ wires becomes a single tube-typed box. \textbf{(right)} A base operation applied independently to every wire in a bundle becomes one box \emph{inside} the tube: this is the SIMD pattern.

\textbf{Highlighting the upper boundary.} We will keep the upper boundary of every tube visible. The reason is that highlighting just the upper boundary recovers the undirected tensor-network drawing of the same axes: data moves along wires inside the tube (operational reading), while the boundary records only the indexed type with no temporal ordering (tensor-network reading). \texttt{einops} chooses a directional presentation that coheres with the operational view, but the undirected tensor-network reading remains visible in the same picture as the tube boundary. The two traditions of \Cref{sec:intro} therefore live alongside each other in a single drawing, sharing geometry but not algebraic structure.

\sidefigR[0.25]{\textbf{Nested tubes for multi-axis tensors.} Iterate the wrapping. A tensor $\mathbf{X}$ of shape $[m]\times[n]\times[p]$ becomes an $\textcolor{orange}{m}$-tube around an $\textcolor{blue}{n}$-tube around a $\textcolor{red}{p}$-tube; each axis is one layer of nesting, with the outermost tube as the topmost iterator. Tube colours read inside-out as the axis hierarchy.}{\scalebox{1.5}{\tikzfig{tensormnp}}}

\vspace{-1em}
\subsection{\texttt{Reshape}, \texttt{reorder}, \texttt{cat}, \texttt{split}}
\label{sec:reshape}

We are now in \texttt{einops} territory. Four operations are common in practice: \texttt{reshape} and \texttt{reorder}, \texttt{cat} (concatenation), and \texttt{split} (slicing). All four share the property that they are bijections between indices. By the display-map correspondence, every such bijection is realisable as a wire-swapping braid of black $\mathbb{R}$ wires: nothing happens to the data, only the metadata commitment about how the wires are bundled changes.

\sidefigL[0.4]{\scalebox{1}{\tikzfig{matmuls/reorder}}}{\texttt{Reorder} swaps two tensor indices: a generalised transposition. In our calculus this is a braid that swaps the order of two tubes via their boundaries. The data wires inside are unaffected; the boundary tells the reader which axis is which.}

\vspace{-1em}
\sidefigR[0.2]{\texttt{Reshape} can change the number of nesting layers but is otherwise an identity process. The cardinality isomorphism $[mn]\cong[m]\times[n]$ rewires data between a flat $mn$-tube and a nested $[m]$-outer, $[n]$-inner pair: a tubing can split its integer grade among child tubes, or gather child grades into a parent. Reshape acts on a single axis at a time, with the other dimensions matching: the standard axis constraint of \texttt{einops.rearrange}.}{\scalebox{1}{\tikzfig{matmuls/reshapevert}}}

\vspace{-1em}
\sidefigL{\scalebox{1}{\tikzfig{matmuls/catsplit}}}{\texttt{cat} \textbf{and} \texttt{split} are concatenation and slicing along an axis. Categorically, they are ordered-sum operations on the grade, not tensor contractions on the data. Given segment metadata, cat embeds two consecutive index blocks into one longer ordered block, and split recovers the two summands. Diagrammatically these are \emph{pants} (cat) and \emph{copants} (split). The pants/copants pair is a strict inverse only relative to the retained segment lengths.}

\vspace{-1em}
\subsection{Repeat, reduce, dot product, and matrix multiplication}
\label{sec:contentful}

We can now move to operations that touch the data, building up to \texttt{einsum} in steps.

\vspace{-1em}
\sidefigR[0.3]{\texttt{repeat} \textbf{is a sock.} The simplest content-bearing move is \texttt{einops.repeat}: copy a (possibly tubed) wiring a number of times. Diagrammatically this is a sock that wraps the input in an outer tube of grade equal to the repeat count. On the index side, repeat corresponds to \emph{delete}: the index map for a repeat operation forgets the new outer grade, since every output position reads the same input cell.}{\scalebox{1}{\tikzfig{matmuls/copytube}}}

\vspace{-2em}
\sidefigL[0.3]{\scalebox{1}{\tikzfig{matmuls/reduce}}}{\texttt{reduce} \textbf{is an anti-sock.} \texttt{einops.reduce} is the converse of repeat: it sheds an outermost tubing layer via an algebra on the base, most commonly $\mathrm{sum}$, $\max$, or $\mathrm{mean}$. The diagrammatic dual of the broadcasting sock is a sealing cap on the outermost tube, which the algebra computes the value of.}

\vspace{-1em}
\sidefigR[0.4]{\textbf{Dot product.} The dot product of two vectors is built from gadgets we already have: zip them together with pants, SIMD elementwise multiply, then reduce with sum. This is the canonical example of a tube ``closing'': two tubes of grade $n$ enter, one wire of grade $1$ exits.}{\scalebox{1}{\tikzfig{ML/towards-matrix-contraction/dot-product}}}

\vspace{-1em}
\sidefigL[0.4]{\scalebox{1}{\tikzfig{ML/towards-matrix-contraction/detailedmatmul}}}{\textbf{Tensor contraction.} A general tensor contraction $\sum_n A_{m,n} B_{k,n}$ is built from the dot-product gadget by tubing-up the non-contracted axes. We use repeat-socks to broadcast each operand to the joint axis layout, pants to pair the two contracted tubes, then a SIMDed dot product. This is \texttt{einsum} pictorially.}

\begin{corollary}[Matrix multiplication]
\label{cor:matmul}
Matrix multiplication $AB^\top$ at shape $m\times k$ is the special case of tensor contraction with two two-axis operands sharing one axis. The repeat-socks collapse to a single broadcast on each side, and the calculus simplifies the contraction to the standard form $C_{i,k} = \sum_n A_{i,n}\, B_{k,n}$.
\end{corollary}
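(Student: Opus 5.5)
We instantiate the general tensor-contraction gadget with operands $A$ of shape $[m]\times[n]$ and $B$ of shape $[k]\times[n]$, contracted axis $[n]$, and free axes $[m]$ and $[k]$. Then we simplify the resulting diagram using the rewrites already introduced: display-map reading of index maps, repeat as delete on indices, pants as pairing of grades, and reduce as the sum algebra. The goal is to show that the composite diagram denotes the function $(A,B)\mapsto C$ with $C_{i,k}=\sum_n A_{i,n}B_{k,n}$. Equivalently, the output at position $(i,k)$ of the $[m]$-outer, $[k]$-inner tube equals that value.

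\textbf{Step 1: specialise the socks.} The joint axis layout of the contraction is $[m]\times[k]\times[n]$. The only axis missing from $A$ is $[k]$, and the only axis missing from $B$ is $[m]$. So each side needs exactly one repeat-sock: a grade-$k$ sock on $A$ and a grade-$m$ sock on $B$.

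Before the sock can act, the missing grade must sit in the right nesting position. For $A$ we insert the $[k]$-tube between $[m]$ and $[n]$; for $B$ we place the $[m]$-tube outermost. This may require a \texttt{reorder} braid on one operand, which is an index bijection by \Cref{sec:reshape}.

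Using the display-map correspondence, the index map of the socked $A$ is the projection $[m]\times[k]\times[n]\to[m]\times[n]$, $(i,k,n)\mapsto(i,n)$. The index map of the socked $B$ is $(i,k,n)\mapsto(k,n)$. Both are instances of delete (\Cref{sec:contentful}). After this step the two broadcast tensors have entries $A_{i,n}$ and $B_{k,n}$ at position $(i,k,n)$.

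\textbf{Step 2: SIMD the dot-product gadget.} With both operands in the common layout, the outer $[m]$ and $[k]$ tubes enclose a single dot-product box acting on two grade-$n$ tubes. By the SIMD convention of \Cref{sec:tubeintro}, this box is applied independently at every $(i,k)$. Inside it, pants zip the two grade-$n$ tubes, yielding pairs $(A_{i,n},B_{k,n})$. Elementwise multiplication then gives $A_{i,n}B_{k,n}$, and the sum anti-sock closes the $n$-tube to $\sum_n A_{i,n}B_{k,n}$.

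Reading the output tube off as an $[m]$-outer, $[k]$-inner tensor gives $C=AB^\top$ in the stated form.

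\textbf{Main obstacle.} The delicate step is justifying that the socks ``collapse to a single broadcast on each side.'' One must check two things. First, the general gadget's broadcasting, together with any required reorder braids, composes to exactly the projection index maps above. Second, no extra layers remain, since grade-1 tubes are invisible.

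My first approach would be to compute the composite finite read map directly, composing functions on $[m]\times[k]\times[n]$. This reduces equality of the two diagrams to equality of composite functions between finite sets.

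The reduction is purely structural except for the single sum anti-sock. The multiply box lies outside the structural fragment, but it only acts pointwise inside the tubes, so it does not affect the index-map computation.
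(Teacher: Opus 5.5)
Your proposal is correct, but it argues semantically where the paper argues by a single local rewrite.

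\textbf{The paper's route.} It reads each sock as an index-delete and the pants as an index-copy on the tube boundaries, then applies counitality of the copy/delete comonoid in the grading. On the $k$ index, the pants' copy meets the sock's delete on the $A$ branch, and counitality collapses the pair to a single $k$ wire into $B$; symmetrically for $m$. What survives is the copy of $n$ into both operands, followed by the sum cap. With no nontrivial nesting or direct-sum interaction left, the paper draws only the top boundaries. This gives the tensor-network picture of $AB^\top$, in which $n$ is visibly the only contracted wire.

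\textbf{Your route.} You compute the composite read maps $(i,k,n)\mapsto(i,n)$ and $(i,k,n)\mapsto(k,n)$ explicitly and evaluate the gadget pointwise. In effect you run the finite-set normal-form procedure instead of performing the rewrite. Your composite is exactly what counitality computes diagrammatically.

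\textbf{What each buys.} Your version makes explicit bookkeeping the paper leaves implicit:
\begin{enumerate}
\item where the new $k$-tube sits in the nesting;
\item whether a reorder braid is needed;
\item why the pointwise multiply box does not interfere with the index computation, which is implicitly grade-naturality.
\end{enumerate}
The paper's version stays inside the graphical calculus as one rewrite. It also produces the simplified diagram itself, which is the real content of ``exposing which index is contracted''. On the top boundary the nesting-order questions you raise disappear, since braids there are just bijections of finite sets.
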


A visual simplification of \Cref{cor:matmul} is provable inside the calculus. Since \texttt{broadcast} (sock) corresponds to index-deletion and \texttt{cat} (pants) to index-copy, counitality simplifies the diagram, exposing which index is contracted. With no nontrivial nesting or direct-summation interaction, we lose nothing by depicting only the topmost boundary and treating data wires as separating guardrails.
\[\tikzfig{ML/towards-matrix-contraction/simplerform}\]

\vspace{-1em}
\subsection{(un)/pack: ragged data via padding}
\label{sec:packunpack}

For ragged tensors, \texttt{einops} provides \texttt{pack} and \texttt{unpack}. We have two presentations: we can \textbf{(left)} decorate tubes with packing metadata (the bracket-label refinement of \Cref{app:metadata}), or we can \textbf{(right)} cast every ragged tensor as a dense tensor with padding elements and recover the original data by partial reads. The latter is closer to practice and keeps tube grades as positive integers, so we adopt it here.
\vspace{-1em}
\[\tikzfig{raggedoptions}\]

\sidefigR{\textbf{Padding extends display maps to partial functions.} A wire that has been padded gives a partial function on indices: each output cell either reads from one input cell, or, if it falls in a padded position, requests an external filler. The filler is not determined by the partial function itself. It belongs to a padding interpretation for the relevant base object or to the operation that consumes the padded wire.}{\scalebox{1}{\tikzfig{displaymaps/pfnsvert}}}

(\texttt{un})\texttt{pad} with an arbitrary $\star$ requires bookkeeping for how the filler propagates through subsequent operations. In practice we choose $-\infty$, $0$, or $1$, since they have neutral or absorbing behaviour under $+$, $\times$, and $\sigma$.

\sidefigL[0.3]{\tikzfig{padadd}}{\textbf{Addition absorbs $0$ paddings.} Because $0$ is the unit of the addition semiring, $0$-padding is visually absorbed by summation algebras.}

\sidefigR[0.3]{\textbf{Multiplication absorbs $1$ paddings.} Because $1$ is the unit of the multiplication semiring, $1$-padding is visually absorbed by products.}{\tikzfig{padmult}}

\sidefigL[0.4]{\tikzfig{padsoftmax}}{\textbf{Softmax transforms -$\infty$ padding into $0$ padding.} Softmax (whether normalising or not) is an $k \in \mathbb{N}$-indexed family of operations $\sigma_k: \mathbb{R}^k \rightarrow \mathbb{R}^k$ with the property that a -$\infty$ logit contributes $0$ output, since $e^{-\infty} = 0$. Hence -$\infty$ paddings can slide through softmax, turning into $0$.}

\begin{proposition}[\texttt{pack}/\texttt{unpack} is a special case of \texttt{pad}/\texttt{unpad}]
\label{prop:packpad}
For ragged components $\{X_i\}$ in any dense bounding shape $B$,
\[
  \texttt{pad}\colon \bigsqcup_i X_i \hookrightarrow B,\qquad
  \texttt{unpad}\colon B \rightharpoonup \bigsqcup_i X_i,\qquad
  \texttt{unpad}\circ \texttt{pad} \;=\; \id.
\]
Ordinary \texttt{pack}/\texttt{unpack} is the case $B = [\sum_i |X_i|]$ with contiguous concatenation; non-contiguous variants are captured by the metadata refinement of \Cref{app:metadata}.
\end{proposition}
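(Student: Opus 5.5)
The plan is to work entirely on the index side, through the display-map correspondence of \Cref{sec:displaymaps} extended to partial functions by padding. Since reads go \emph{backwards}, a data-level map between tensors of shapes $S$ and $T$ is determined by a partial function $T \rightharpoonup S$ on positions; composition of data maps corresponds to reversed composition of these read maps. Accordingly I fix an injective \emph{placement} map $\iota\colon \bigsqcup_i X_i \hookrightarrow B$, which records where each component cell sits in the bounding shape. This map is the metadata that makes the statement meaningful. Distinct cells land in distinct positions because components do not overlap, and this is exactly what a valid padding layout means.

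Next I read off the two operations from $\iota$. The operation \texttt{pad} writes data into $B$: its read map is the partial inverse $\iota^{-1}\colon B \rightharpoonup \bigsqcup_i X_i$. This map is defined on $\mathrm{im}(\iota)$, and positions outside the image request the external filler, so \texttt{pad} is a padded wire in the sense of the partial-function extension. The operation \texttt{unpad} reads data back out: its read map is the total injection $\iota$ itself, which realises \texttt{unpad} using only swaps and deletes, with no copies. The directions in the statement ($\hookrightarrow$ for \texttt{pad}, $\rightharpoonup$ for \texttt{unpad}) are the position-level names of these maps. I will state this identification explicitly so that the reader is not confused by the contravariance.

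The identity then follows from a composition of reads. The read map of $\texttt{unpad}\circ\texttt{pad}$ is $\iota^{-1}\circ\iota$. Injectivity makes this total and equal to $\id_{\bigsqcup_i X_i}$. In particular, no cell of the output ever reads a padded position, so the filler, whichever of $-\infty$, $0$, or $1$ is chosen, never enters the result, and the identity holds for every padding interpretation. Diagrammatically, this is the padded wire followed by deletes of exactly the padded slots, and counitality collapses it to bare wires.

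For the specialisation, take $B=[\sum_i |X_i|]$ and let $\iota$ be contiguous concatenation: the cell $(i,x)$ goes to $\sum_{j<i}|X_j| + \mathrm{ord}(x)$. This $\iota$ is a bijection, so $\iota^{-1}$ is total and no padding is requested. The construction then reduces to iterated \texttt{cat}/\texttt{split} (pants/copants) relative to the retained segment lengths, which is exactly \texttt{einops} \texttt{pack}/\texttt{unpack}. Non-contiguous layouts are simply other injections $\iota$, which I defer to \Cref{app:metadata}. The main obstacle I expect is not the identity but the bookkeeping of directions and of the filler. I must make clear that the equation lives at the level of position maps and that the filler is irrelevant only because injectivity keeps padded positions out of the image of $\iota$. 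I should also note that $\texttt{pad}\circ\texttt{unpad}\neq\id$ in general, since that composite replaces the non-image cells of $B$ by filler.
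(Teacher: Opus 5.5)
Your proposal is correct and is essentially the paper's own ``by construction'' argument: \texttt{pad} places each valid cell via the injection, \texttt{unpad} is its partial inverse, and their composite is the identity by injectivity. You add the contravariant read-map bookkeeping, the irrelevance of the filler, and the bijectivity check for contiguous concatenation, and there is one small naming slip: discarding freshly introduced filler slots collapses because deleting a constant is trivial (terminality of $\mathbf{1}$, i.e.\ naturality of delete), not by counitality.
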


\begin{proof}[Sketch]
By construction: \texttt{pad} places each valid cell at its embedded coordinate in $B$; \texttt{unpad} is the partial inverse, undefined on padded coordinates.
\end{proof}

\begin{center}
\includegraphics[width=0.65\linewidth]{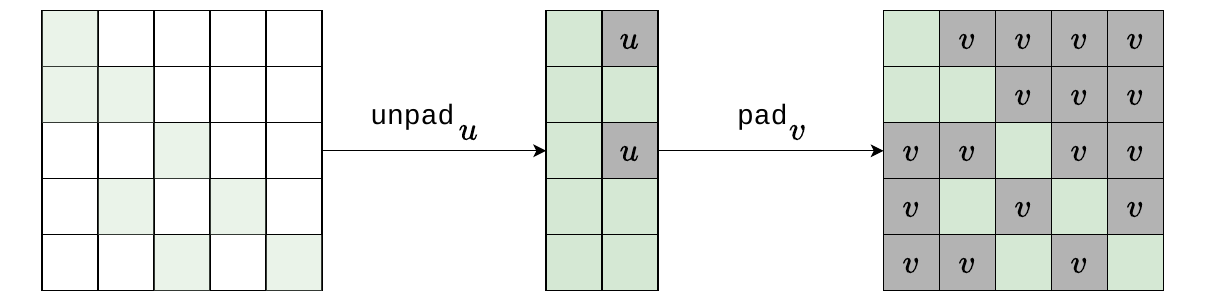}
\captionof{figure}{Action of \texttt{unpad} and \texttt{pad} on a square matrix. Observe that the padding introduced in \texttt{unpad} is overwritten by \texttt{pad}.}
\end{center}

\section{Method: a graded-monad calculus and sliding spectacles}
\label{sec:method}

The formalism behind \Cref{tab:primer} is a graded semantics over the category $\FinSet^{\op}$ of finite sets and (op-of-)functions: full categorical foundations are in \Cref{app:gradedmonad}; the body needs only the consequence below.

\sidefigR[0.3]{%
\begin{proposition}[Grade-naturality, ``sliding spectacles'' (P3)]
\label{prop:slide}
For every base morphism $h\colon A\to B$ and every grade morphism $r\colon Y\to X$ in $\FinSet^{\op}$, the reindexing $r^*$ commutes with $T(h)$:
\[
  T_Y(h) \circ r^* \;=\; r^* \circ T_X(h)\colon\; A^X\to B^Y.
\]
For partial read maps with padding interpretation $\bot$, additionally $h(\bot_A)=\bot_B$.
\end{proposition}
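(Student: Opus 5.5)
The plan is to unfold the definitions of $T$ and $r^*$ and check the equation pointwise on each output index $y \in Y$.

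\textbf{Setup.} A grade $X$ is a finite set, and $T_X(A) = A^X$ is the set of functions $X \to A$, i.e. tensors indexed by $X$. A base morphism acts pointwise: $T_X(h)(a) = h \circ a$ for $a \colon X \to A$. A grade morphism $r \colon Y \to X$ in $\FinSet^{\op}$ is read as a function $\rho \colon X \leftarrow Y$ between the underlying sets. This is exactly the backwards-read display map of \Cref{sec:displaymaps}. Its reindexing is precomposition, $r^*(a) = a \circ \rho \colon Y \to A$.

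\textbf{Total case.} For $a \in A^X$ and $y \in Y$,
\[
\bigl(T_Y(h)(r^* a)\bigr)(y) = h\bigl(a(\rho(y))\bigr),
\]
and also
\[
\bigl(r^*(T_X(h)(a))\bigr)(y) = (h \circ a)(\rho(y)) = h\bigl(a(\rho(y))\bigr).
\]
The two sides agree. This is just associativity of function composition, $h \circ (a \circ \rho) = (h \circ a) \circ \rho$. Because the argument is pointwise, it holds uniformly for every $h$ and every $\rho$. Swap, copy and delete all arise as such $\rho$, so no case split over generators is needed.

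\textbf{Partial case.} Now let $\rho$ be a partial function, as in \Cref{sec:packunpack}. Reindexing is defined by $(r^* a)(y) = a(\rho(y))$ when $\rho(y)$ is defined, and $\bot_A$ otherwise. I split on $y$:
\begin{itemize}
\item If $\rho(y)$ is defined, the total computation above applies verbatim.
\item If $\rho(y)$ is undefined, the left side gives $h(\bot_A)$, since $h$ is applied to the filler. The right side gives $\bot_B$, since the filler is inserted after $h$.
\end{itemize}
The two sides therefore agree exactly when $h(\bot_A) = \bot_B$. So this condition is both sufficient and necessary for the padded case, which justifies the proposition's extra hypothesis.

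\textbf{Main obstacle.} The main difficulty is not the calculation but fixing conventions so the statement is well-typed. I need the variance: $r$ lives in $\FinSet^{\op}$, so $r^*$ goes from $A^X$ to $A^Y$. I also need a precise meaning for "padding interpretation", namely a chosen point $\bot_A$ on each base object. With these fixed, I would cite the examples after \Cref{prop:packpad} as instances where the condition holds or fails. Softmax sends $-\infty$ to $0$, so it is the case of two different paddings, $\bot_A = -\infty$ and $\bot_B = 0$. An arbitrary $h$ applied to $0$-padding generally violates the condition unless $h(0) = 0$.

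Finally, I would note that the identity is exactly naturality of $A \mapsto A^X$ in the grade, so the functoriality of $r \mapsto r^*$ comes for free.
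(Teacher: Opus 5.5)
Your proof is correct, but it takes a different route from the paper. You check the identity semantically, pointwise in $y$, so it reduces to $h\circ(a\circ\rho)=(h\circ a)\circ\rho$ for an arbitrary read function $\rho$.

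The paper (\Cref{app:gradedmonad}) argues through generators instead. By \Cref{prop:functionnormalform} and \Cref{prop:copydeletenormal}, every reindexing is a composite of copies $\delta$, deletes $\epsilon$ and braids, with lexicographic isomorphisms for product grades. Each of these commutes with an arbitrary $h$: braids are natural, and by Fox's theorem every morphism of $\Euc$ is a comonoid homomorphism, so $\delta\circ h=(h\oplus h)\circ\delta$ and $\epsilon\circ h=\epsilon$. The padded case is reduced to the total one by viewing a partial read as a total map into $X+\{\bot\}$, with the filler supplied as an explicit constant whose compatibility condition is $h(\bot_A)=\bot_B$.

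Your route is shorter and self-contained. It needs neither the normal-form factorisation nor Fox's theorem, and it handles product grades and partial reads uniformly. Your case split also proves the ``only if'' direction that the appendix merely asserts, provided $\rho$ is genuinely partial; for total $\rho$ the condition is vacuous. The paper's route buys something different: the slide follows from local rewrites of the diagram generators (copy, delete and swap each pass any box), using only the cartesian structure of the base. That is what licenses it as a rewrite inside the graphical calculus, not merely as a fact about the semantics.

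Two side remarks need correcting. First, softmax is not an instance of this proposition. It is a tube-in-tube-out box that normalises along the very axis carrying the $-\infty$ padding, so it is not of the form $T_X(h)$ for that grade. Its $-\infty\mapsto 0$ law is a separately declared padding coherence (\Cref{lem:FRsmax}); the genuine pointwise instance is $h=\exp$ on the extended reals, with $\exp(-\infty)=0$. Second, your identity expresses naturality of $r^*$ in the base object $A$, not functoriality of $r\mapsto r^*$. Functoriality, $(r\circ s)^*=s^*\circ r^*$, is a separate and equally short check.
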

\smallskip
}{\scalebox{1}{\tikzfig{tubediagrams/elemwise}}}
\noindent A grade morphism $r$ acts at both boundaries of a tube, so it is drawn as a \emph{spectacles} bracketing tubing, potentially changing the tubing before-and-after. Naturality says spectacles may slide freely along tubes unless blocked along the tube boundary by a tube-to-tube base morphism: this single rewrite is a proof-engine. When we depict only the top boundary of tubes, \emph{gradings are just string diagrams for partial functions on finite sets.}

\section{Equivariance, in one line per claim}
\label{sec:equivariance}

The statements in this section are deliberately familiar. They calibrate the calculus against facts whose ordinary-index proofs are well-understood: their value here is that each becomes a one-line application of grade-naturality (\Cref{prop:slide}) within the rewrite system of \Cref{tab:primer}. The main stress test of the calculus is the mask-augment theorem of \Cref{sec:maskprep}, where the same rewrite principle controls a less obvious structural transformation.

\subsection{Transformer permutation equivariance}
\label{sec:attnequiv}

Many data transformations have tubed wires as inputs and outputs, and usually they represent boundaries that indexing gadgets on tube boundaries do not interact with. Interesting mathematics occurs when there is interaction. Let $f: \mathbb{R}^k \rightarrow \mathbb{R}^k$ and let $\pi: k \rightarrow k$ denote a bijection; i.e. a permutation of indices, or a \texttt{reorder}.

\sidefigR[0.3]{%
\begin{definition}[Permutation equivariance]
\label{def:permequiv}
A process $f\colon \textcolor{Purple}{A}^{\textcolor{blue}{T}}\to\textcolor{ForestGreen}{B}^{\textcolor{blue}{T}}$ is $\pi$-\emph{equivariant} if it is equal whether $\pi$-\texttt{reorder} occurs before or after, as drawn opposite. Visually, $\pi$ ``hops over". If this holds for all $\pi$, then $f$ is \emph{permutation-equivariant}
\end{definition}
}{\scalebox{1}{\tikzfig{ML/permutation/def-equivariance}}}
\vspace{-1em}

\sidefigL[0.3]{\scalebox{1}{\tikzfig{ML/permutation/def-invariance}}}{%
\begin{definition}[Permutation invariance]
\label{def:perminv}
A process $f\colon \textcolor{ForestGreen}{A}^{\textcolor{blue}{T}}\to \textcolor{Purple}{B}$ is $\pi$-\emph{invariant} if $\pi$-\texttt{reorder} on the inputs has no effect, as drawn opposite. Visually, $\pi$ is ``absorbed" reading one way, and ``freely generated" the other. If this holds for all $\pi$, then $f$ is \emph{permutation invariant}.
\end{definition}
}

\begin{remark}
For $f: \mathbb{R}^j \rightarrow \mathbb{R}^k$ where $j$ and $k$ both support representations of a (finite) group $G$, the definitions above are adaptable to $G$-equivariance and $G$-invariance with the same visual consequences.
\end{remark}

\begin{proposition}[Attention is permutation-equivariant in queries]
\label{prop:q-equiv}
\begin{proof}
$\sigma$ denotes softmax throughout. \textcolor{blue}{Blue} is for \textcolor{blue}{embedding dimension} and \textcolor{orange}{orange} is for \textcolor{orange}{sequence length}. A permutation on the queries' sequence axis slides through to the output.
\[\scalebox{0.75}{\tikzfig{ML/equi0}}\;{=}\;
\scalebox{0.75}{\tikzfig{ML/equi1}}\;{=}\;
\scalebox{0.75}{\tikzfig{ML/equi2}}.\]
\end{proof}
\end{proposition}

\begin{proposition}[Attention is invariant under simultaneous key/value permutation]
\label{prop:kv-inv}
\begin{proof}
A simultaneous permutation on the shared K,V sequence axis is absorbed by the contraction. Reading bottom-up; \textbf{(a)}: $\sigma$ is perm.-equi.; \textbf{(b)}: perms. have inverses; \textbf{(c)}: pants copy/merge indexing; \textbf{(d)}: $\Sigma$ is perm.-inv.
\[\scalebox{0.5}{\tikzfig{ML/inv0vert}}\;\raisebox{0.75cm}{$\overset{}{=}$}\;
\scalebox{0.5}{\tikzfig{ML/inv1vert}}\;\raisebox{0.75cm}{$\overset{(a)}{=}$}\;\scalebox{0.5}{\tikzfig{ML/inv2vert}}\;\raisebox{0.75cm}{$\overset{(b)}{=}$}\;
\scalebox{0.5}{\tikzfig{ML/inv3vert}}\;\raisebox{0.75cm}{$\overset{(c)}{=}$}\;\scalebox{0.5}{\tikzfig{ML/inv4vert}}\;\raisebox{0.75cm}{$\overset{(b)}{=}$}\;
\scalebox{0.5}{\tikzfig{ML/inv5vert}}\;\raisebox{0.75cm}{$\overset{(d)}{=}$}\;\scalebox{0.5}{\tikzfig{ML/inv6vert}}\]
\end{proof}
\end{proposition}

\section{Mask-augment duality}
\label{sec:maskprep}

The folklore explanation of causal masking and positional encoding is that they break the permutation equivariance of attention. \citet{haviv2022transformer} demonstrated empirically that causal masking is sufficient for this purpose, confirming the theoretical observation of \citet{tsai2019transformer}. A na\"{i}ve method to achieve the same goal for an unmasked attention is to augment the data going through the Key and Value streams: take the original sequence $\langle a, b, c, d, ...\rangle$, and create new ``chunked" tokens corresponding to prefixes $\langle [a], [a,b], [a,b,c], [a,b,c,d], ... \rangle$; this is evidently not permutation equivariant, and intuitively there is some shape-level relationship to causal masking. This construction is in fact equal to causal masking, as a corollary of the following general theorem:

\begin{theorem}[Mask-Augment Duality]
\label{thm:duality}
For mask $\mu\colon k \to \mathcal{P}(k)$ encoding the visible positions per query, let $E_\mu = \{(i,j)\in k\times k : j \in \mu(i)\}$ with projections $p,s\colon E_\mu \to k$, $p(i,j) = i$, $s(i,j) = j$. Masked attention equals fibrewise unmasked attention over $p^{-1}(i)$ using $K\circ s$ and $V\circ s$:
\[
  \mathrm{Attn}_\mu(Q,K,V)_i
  \;=\;
  \sum_{j\in\mu(i)} \frac{\exp(\langle Q_i, K_j\rangle / \sqrt{d})}{\sum_{\ell\in\mu(i)} \exp(\langle Q_i, K_\ell\rangle / \sqrt{d})}\, V_j .
\]
Equivalently, the K/V streams are augmented along the tube-function $\mathcal{A}\colon k \twoheadleftarrow \bigsqcup_{i \leq k}\mu(i)$ with $\mathcal{A}^{-1}(i) = \mu(i)$. Rows with $\mu(i)=\varnothing$ are fixed to a default, e.g.\ the zero vector.
\end{theorem}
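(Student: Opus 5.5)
The plan is to start from the standard definition of masked attention and reduce everything to a fibrewise statement over the display $p\colon E_\mu\to k$. Masked attention is usually defined by adding $-\infty$ to the logits at positions $j\notin\mu(i)$ and then applying a full row softmax over $[k]$. Fix a query $i$. By the earlier observation that softmax turns $-\infty$ padding into $0$ padding, every masked position contributes weight $0$ to the output. The normaliser then becomes $\sum_{\ell\in\mu(i)}\exp(\langle Q_i,K_\ell\rangle/\sqrt d)$, because $e^{-\infty}=0$ is the additive unit. The surviving terms of the value sum are exactly $j\in\mu(i)$, since $0$-padding is absorbed by the sum algebra. This gives the displayed formula, provided $\mu(i)\neq\varnothing$. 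When $\mu(i)=\varnothing$ the expression is $0/0$, so we fix the default by convention; this is why the statement includes the default clause.

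Next I identify the displayed formula with unmasked attention on a fibre. The fibre $p^{-1}(i)=\{i\}\times\mu(i)$ is in bijection with $\mu(i)$ via $s$. Reindexing the keys and values along $s$ gives $(K\circ s)|_{p^{-1}(i)}$ and $(V\circ s)|_{p^{-1}(i)}$. Ordinary unmasked attention of the single query $Q_i$ against this key/value sequence of length $|\mu(i)|$ is, by definition, the right-hand side. This identification does not depend on how $p^{-1}(i)$ is ordered, because \Cref{prop:kv-inv} shows attention is invariant under simultaneous key/value permutation. Collecting over $i$ expresses $\mathrm{Attn}_\mu$ as $p$-indexed unmasked attention over $E_\mu$.

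For the augmentation reading, I note that $E_\mu\cong\bigsqcup_i\mu(i)$ as finite sets. Under this isomorphism, $p$ becomes the surjection $\mathcal A$ onto the nonempty rows, with $\mathcal A^{-1}(i)=\mu(i)$, and $s$ becomes a total function into $k$. By the display-map correspondence, $s$ is a copy/delete wiring, so $K\circ s$ and $V\circ s$ are the \texttt{repeat}/gather of the K/V streams into the chunked layout, for example the prefixes in the causal case. Queries are broadcast along $\mathcal A$, and the contraction $\Sigma$ runs within each $\mathcal A$-fibre. Diagrammatically, I would slide the reindexing spectacles $s^*$ past the pointwise maps ($\exp$, the scaling, and the elementwise product) using \Cref{prop:slide}. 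The step then reduces to rewriting the mask pants as fibre pants.

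The main obstacle is the treatment of padding. \Cref{prop:slide} requires $h(\bot_A)=\bot_B$, and the fillers change along the pipeline: $-\infty$ at the logits, $0$ after softmax, and $0$ absorbed by the sum. This must be tracked carefully, along with the empty-fibre case. I would handle it by presenting the mask as the partial read $k\times k\rightharpoonup E_\mu$, an instance of pad/unpad (\Cref{prop:packpad}). I would then check the coherence of each padding separately at the logit, softmax, and sum stages.
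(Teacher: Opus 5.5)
Your proposal is correct, but it takes a more elementary route than the paper.

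\textbf{Your route.} You compute pointwise on one query row. Starting from the $-\infty$-logit definition of $\mathrm{Attn}_\mu$, three facts give the displayed fibrewise formula directly: $e^{-\infty}=0$, absorption of $0$ by the sum, and $0\cdot V_j=0$. You then read the augmentation off the ragged display $p\colon E_\mu\to k$ together with the \emph{total} read $s$. As a result, no filler ever appears on the augmented side.

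\textbf{The paper's route.} The paper proves the identity inside the calculus, by an eleven-frame rewrite chain from augmented to masked attention. It writes the augment as $\texttt{unpad}\circ\texttt{repeat}$ on a dense $k\times k$ surface (\Cref{lem:augpad}) and broadcasts the queries. It then pushes the \texttt{unpad} through the score einsum, the softmax and the final sum, using the padding lemmas of \Cref{app:maskprepdetail}, until it becomes a $-\infty$ mask on the scores. The algebraic inputs are the same three facts you use; only the bookkeeping differs.

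\textbf{What each buys.} The paper's chain stays within grade-naturality plus finitely many declared padding laws, which is its methodological point. It also covers the whole layer, including the $W_q,W_k,W_v$ projections, and produces executable intermediate programs. Your argument is shorter and handles empty fibres more carefully: the paper's softmax lemma $\sigma(\vec x\,\|\,-\infty\cdots-\infty)=\sigma(\vec x)\,\|\,0\cdots0$ tacitly needs $\vec x$ nonempty.

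\textbf{Small refinement.} On the fibrewise side, a row with $\mu(i)=\varnothing$ is an empty sum and already equals the zero vector. So the default clause is needed only on the $-\infty$-masked side, which is where the $0/0$ arises.

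Your final two paragraphs sketch the diagrammatic route, but the first two already establish the statement.
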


\begin{proof}
Full lemma statements and a more detailed re-display of the chain are in \Cref{app:maskprepdetail}; an \texttt{einops}-code transcription accompanies it.

\vspace{-1em}
\begin{center}
\setlength{\tabcolsep}{1pt}
\begin{tabular}{@{}ccccccccc@{}}
\scalebox{0.55}{\tikzfig{maskprep/augment0vert}} &
$\rightarrow$ &
\scalebox{0.55}{\tikzfig{maskprep/augment1vert}} &
$\rightarrow$ &
\scalebox{0.55}{\tikzfig{maskprep/augment2vert}} &
$\rightarrow$ &
\scalebox{0.55}{\tikzfig{maskprep/augment3vert}} &
$\rightarrow$ &
\scalebox{0.55}{\tikzfig{maskprep/augment4vert}}
\end{tabular}\\[-3em]
\begin{tabular}{@{}ccccccccccc@{}}
\scalebox{0.55}{\tikzfig{maskprep/augment5vert}} &
$\rightarrow$ &
\scalebox{0.55}{\tikzfig{maskprep/augment6vert}} &
$\rightarrow$ &
\scalebox{0.55}{\tikzfig{maskprep/augment7vert}} &
$\rightarrow$ &
\scalebox{0.55}{\tikzfig{maskprep/augment8vert}} &
$\rightarrow$ &
\scalebox{0.55}{\tikzfig{maskprep/augment9vert}} &
$\rightarrow$ &
\scalebox{0.55}{\tikzfig{maskprep/augment10vert}}
\end{tabular}
\end{center}
\vspace{-2em}
\end{proof}

Three threads of attention-time optimisations are special cases of \Cref{thm:duality}. Attribution, speedup formulae, and on-device validation are in \Cref{app:experiments}; \Cref{fig:summary} (Apple M4 Max) reports the headline numbers.

\begin{corollary}[Hoisting common fibres]
\label{cor:hoisting}
A common sub-fibre factors out of the per-query computation. The strict-left-prefix case is prompt-cache prefix sharing \citep{kwon2023vllm,zheng2024sglang}; the DAG case is shared-context decoding \citep{juravsky2024hydragen,yao2023tree,leviathan2023speculative}.
\end{corollary}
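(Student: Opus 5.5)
We prove \Cref{cor:hoisting} as a direct application of \Cref{thm:duality}, by factoring the augmentation map $\mathcal{A}$ through a shared piece. Fix a mask $\mu$ and a subset $C\subseteq k$ of positions that is \emph{common} to a set of queries $I\subseteq k$, meaning $C\subseteq\mu(i)$ for every $i\in I$. For $i\in I$ split the visible fibre as $\mu(i)=C\sqcup R_i$, where $R_i=\mu(i)\setminus C$ is the private remainder. Then the fibre $p^{-1}(i)\subseteq E_\mu$ decomposes as a disjoint union, and $s$ restricted to it factors through the injection $C\sqcup R_i\hookrightarrow k$.

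\textbf{Step 1 (split the numerator and denominator).} By \Cref{thm:duality}, for $i\in I$ the output $\mathrm{Attn}_\mu(Q,K,V)_i$ is unmasked attention over the fibre. Write $a_i^{C}=\sum_{\ell\in C}\exp(\langle Q_i,K_\ell\rangle/\sqrt d)$ and $o_i^{C}=\sum_{j\in C}\exp(\langle Q_i,K_j\rangle/\sqrt d)\,V_j$, and define $a_i^{R}$ and $o_i^{R}$ the same way over $R_i$. Since the reduce algebra is $+$, the sum over the fibre is the sum over the two summands. This is the pants/copants (\texttt{cat}/\texttt{split}) decomposition of the grade: $0$ is the unit of addition, so the cat of the two blocks can be read through the reduce. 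Hence
\[
\mathrm{Attn}_\mu(Q,K,V)_i=\frac{o_i^{C}+o_i^{R}}{a_i^{C}+a_i^{R}},
\]
which is the standard log-sum-exp merge of two partial attentions.

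\textbf{Step 2 (hoist the common gather).} The reindexing $K\circ s$ restricted to the $C$-block is the same read map $C\hookrightarrow k$ for every $i\in I$. Diagrammatically it is one spectacles that appears $|I|$ times, once inside the query tube. Its index map does not depend on the query index, so it factors as the query-free read followed by a repeat-sock over $I$; recall that repeat is index deletion. \Cref{prop:slide} then lets us slide the spectacles $C\hookrightarrow k$, together with the pointwise projections producing $K_C$ and $V_C$, outward past the sock. As a result the gathered common K/V block is computed once and broadcast, rather than recomputed per query. For the prefix case, take $C=[0,c)$ with $\mu(i)\supseteq C$ for all later queries; this is exactly a cached prefix reused across queries. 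For the DAG case, apply the argument at each node: the visible set is a union of the ancestor blocks along a path, each block is a common sub-fibre for its descendants, and we induct on the DAG from the roots.

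\textbf{Main obstacle.} The sock-factoring of Step 2 is routine naturality. The delicate step is Step 1, because softmax is not pointwise over the fibre: its normaliser couples $C$ and $R_i$. Softmax therefore acts as a landmark that spectacles cannot cross, and the hoisting must be stated for the pair $(o,a)$ of unnormalised numerator and denominator rather than for normalised outputs. Concretely, I would first rewrite $\sigma$ as $\exp$ (a pointwise SIMD box) followed by a sum-reduce and a final division. I would then apply the $0$-padding absorption by summation to the cat, and perform the division only after merging. Empty remainders $R_i=\varnothing$ are handled by $0$-padding, with $a_i^{R}=0$ and $o_i^{R}=0$. The degenerate case $\mu(i)=\varnothing$ cannot occur here, since $\mu(i)\supseteq C$ and $C$ is nonempty.
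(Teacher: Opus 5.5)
Your argument is correct, and it is considerably more explicit than the paper's. The paper gives no derivation for this corollary. It asserts it as an instance of the fibrewise reading of \Cref{thm:duality}, and in \Cref{app:exp:hoisting} it supplies a cost model, the predicted speedup $n(P+s)^2/(P^2+2nPs+ns^2)$, and on-device measurements.

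You supply two pieces of actual mechanism. The first is the factorisation $I\times C\to C\hookrightarrow k$ of the common read map through a sock, with \Cref{prop:slide} moving $w_k,w_v$ outside the query tube. The second is the additive split $(o_i^C+o_i^R)/(a_i^C+a_i^R)$. The split handles something the paper never confronts: the theorem treats each fibre as a unit, and isolating a \emph{sub}-fibre forces you onto unnormalised pairs, because the softmax normaliser couples $C$ and $R_i$.

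Be aware that Step 1 is needed only for the Hydragen-style reading, where you attend to the shared block as one batched product and then merge. For plain K/V-cache sharing, each query still runs a single softmax over its whole fibre against the shared $K_C,V_C$. There Step 2 alone suffices, so calling Step 1 the main obstacle overstates it.

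There are three refinements to make.
\begin{enumerate}
\item The paper's cost model actually saves the prefix's own forward pass: the $P^2$ term is paid once rather than $n$ times. That is more than the per-layer projections. Your argument already gives it, because \Cref{prop:slide} holds for an arbitrary base morphism and $T_C(\mathbb{R}^d)$ is itself an object of $\Euc$. Since prefix queries have $\mu(i)\subseteq C$, any computation depending only on the $C$-block slides past the request-broadcast sock. You should say so explicitly rather than stopping at $w_k,w_v$.
\item In the DAG case, the visible set is the union of \emph{all} ancestor blocks, not just those along a single path. No induction is needed: split $\mu(i)$ into all its blocks at once and use a multi-way version of Step 1.
\item Moving the gather past the sock is functoriality of reindexing, $(\iota\circ\pi)^*=\pi^*\circ\iota^*$. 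Naturality is what moves the base maps.
\end{enumerate}
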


\begin{corollary}[Compacting disjoint fibres]
\label{cor:compaction}
Pairwise-disjoint fibres compact into dense slabs; one SDPA per slab is exact. This is sequence packing \citep{krell2021packing,kosec2021packing} and the variable-length path of FlashAttention, xFormers, and FlexAttention \citep{dao2023flashattention2,xformers,flexattention}.
\end{corollary}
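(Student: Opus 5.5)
The plan is to derive \Cref{cor:compaction} from \Cref{thm:duality} by working entirely on the fibre display $E_\mu$ and its projections $p,s\colon E_\mu\to k$. By \Cref{thm:duality}, masked attention is computed row by row: for each query $i$ we read $K\circ s$ and $V\circ s$ over the fibre $p^{-1}(i)\cong\mu(i)$ and apply unmasked attention there. The hypothesis ``pairwise-disjoint fibres'' is read as follows. The set of queries is partitioned into blocks $k=\bigsqcup_b Q_b$. Queries in the same block see the same K/V set $S_b$, so $\mu(i)=S_b$ for all $i\in Q_b$. Distinct blocks have $S_b\cap S_{b'}=\varnothing$. This is exactly the packed-sequence situation.

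\textbf{Step 1 (factor the display).} We show that $E_\mu\cong\bigsqcup_b Q_b\times S_b$ as finite sets over $k$. The map $p$ becomes the projection to the $Q_b$ factor on each summand, and $s$ becomes the projection to the $S_b$ factor. This is a direct set-level check from the definition $E_\mu=\{(i,j):j\in\mu(i)\}$. Diagrammatically, it expresses the augmentation tube-function $\mathcal{A}$ as a cat (pants) over $b$ of repeat-socks. On each block, $\mathcal{A}$ restricts to the composite ``broadcast $S_b$ across $Q_b$''.

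\textbf{Step 2 (slide attention into the slabs).} Every operation in the fibrewise attention depends on only two things: the query index $i$ and the fibre over it. Those operations are the dot products, $\sigma$ on each fibre, and the $\Sigma$ reduction against $V$. We therefore use grade-naturality (\Cref{prop:slide}) to slide the split along $\bigsqcup_b$ (copants on the query axis) outward through the per-query computation. The query-side reindexing $Q_b\hookrightarrow k$ commutes with the SIMD-over-queries attention, just as in \Cref{prop:q-equiv}. The repeat-sock over $Q_b$ on the K/V side is removed by the counitality simplification used after \Cref{cor:matmul}. The result is one dense unmasked attention $\mathrm{Attn}(Q|_{Q_b},K|_{S_b},V|_{S_b})$ per block, which is one SDPA per slab. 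Reassembling along $\bigsqcup_b Q_b\cong k$ with cat gives back the full output.

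\textbf{Step 3 (practical layout).} To match varlen kernels, we invoke \Cref{prop:packpad}. The ragged slabs $\{S_b\}$ and $\{Q_b\}$ are packed contiguously, and since the blocks are disjoint, the pad/unpad reads are injective and undefined nowhere inside a slab. No padding filler therefore enters any softmax, and exactness needs no appeal to the $-\infty$ absorption rule.

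I expect Step 2 to be the main obstacle. Sliding the copants on the query axis past the attention box requires that the box acts pointwise in the query grade with its K/V input reindexed by $s$. This is only legitimate once Step 1 has shown that $s$ restricted to block $b$ factors through $S_b$ independently of $i$. I would first verify this factorisation as an equality of partial functions on finite sets, which is decidable by the structural-fragment result. Then I would apply \Cref{prop:slide} blockwise.
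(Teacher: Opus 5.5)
Your proposal is correct and follows the paper's route. The paper gives no separate argument beyond presenting this as a special case of \Cref{thm:duality}, with only empirical validation in \Cref{app:exp:compaction}; your factorisation $E_\mu\cong\bigsqcup_b Q_b\times S_b$, followed by one unmasked attention per rectangle, is exactly the specialisation it leaves implicit. Your reading of the hypothesis (fibres equal within a query block, disjoint across blocks) is the right one, since it reproduces the paper's predicted speedup $1/\rho$ with $\rho=\sum_b\ell_b^2/(\sum_b\ell_b)^2$; note that disjointness of the $S_b$ is needed only for the duplication-free packed layout of your Step~3, not for exactness of the per-slab SDPA.
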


\begin{corollary}[Bounded-component scheduling]
\label{cor:scheduling}
If each row sees at most $c$ intervals of total width $w$, the mask compiles to a row-state vocabulary with $\mathcal{O}(c\,T\,w/B^2)$ tile visits versus $\mathcal{O}(T^2/B^2)$ dense. The $c{=}1$ case is sliding-window attention \citep{beltagy2020longformer,zaheer2020bigbird,jiang2023mistral,xiao2024streamingllm}; $c{\geq}2$ covers local+global mixtures. Kernel-side compilation: FlashMask and FlexAttention \citep{wang2024flashmask,flexattention}.
\end{corollary}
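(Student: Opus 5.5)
We need to prove Corollary~\ref{cor:scheduling} (bounded-component scheduling) from the Mask-Augment Duality of \Cref{thm:duality}. The plan has three parts: build a row-state vocabulary from the mask, use it to derive the tile count, and then check the named special cases.

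\textbf{Step 1: the row-state decomposition.} Start from the display $E_\mu$ of \Cref{thm:duality}. By hypothesis each fibre $\mu(i)=p^{-1}(i)$ is a union of at most $c$ intervals $[a_{i,r},b_{i,r})$, $r\le c$, with $\sum_r (b_{i,r}-a_{i,r})\le w$. Tile the query axis into blocks of $B$ rows and the key axis into blocks of $B$ columns.

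Define the \emph{row-state} of a query block $I$ as the set of key blocks $J$ with $E_\mu\cap(I\times J)\neq\varnothing$, together with a flag recording whether that tile is full or partial. The vocabulary is the set of distinct such states, and it is what a kernel such as FlashMask or FlexAttention stores.

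\textbf{Step 2: rewrite attention into a tile schedule.} By \Cref{thm:duality}, masked attention equals unmasked attention over the augmented K/V streams along $\mathcal{A}$. Factor $\mathcal{A}$ through the coarser tube-function that sends each key block to the query blocks whose row-state contains it. This is a reindexing on the grade, so grade-naturality (\Cref{prop:slide}) lets the K/V read spectacles slide past the pointwise score and exp boxes.

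The softmax normaliser is a reduction. It is assembled blockwise using the standard online-softmax (log-sum-exp) algebra, which is associative. Partial tiles carry the residual $-\infty$ padding, and that padding becomes $0$ through softmax by the padding coherence shown earlier. The result is that the computation is exactly a sum over visited tiles.

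\textbf{Step 3: counting visited tiles.} For each query row, the $c$ intervals of total width $w$ meet at most $\sum_r(\lceil (b_r-a_r)/B\rceil+1)\le w/B+2c$ key blocks. The worst case for a query block of $B$ rows comes from taking the union over those rows.

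This union is the step I expect to be the main obstacle. Without further structure, rows inside one block could have intervals that drift arbitrarily, and then the union need not stay small. I would first try to assume, or derive from the row-state vocabulary, that interval endpoints move by $O(B)$ within a block. This holds for sliding-window and prefix-type masks. Under that assumption each interval's union over the block still spans $O(w/B+1)$ key blocks, so each query block visits $O(c(w/B+1))$ tiles.

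Summing over the $T/B$ query blocks gives $O(cTw/B^2+cT/B)$ tiles. This is $O(cTw/B^2)$ once $w\ge B$, compared with $T^2/B^2$ tiles for the dense computation.

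\textbf{Step 4: the named cases.} For $c=1$ with $\mu(i)=[i-w,i]$ we recover the sliding-window mask. For $c\ge2$ we recover local+global mixtures, where a global prefix interval is added to the local window. Both satisfy the drift condition, so the bound applies to them.

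If the drift condition cannot be established from the hypotheses as stated, I would state the corollary with that condition made explicit, rather than claim the bound for arbitrary masks with $c$ intervals of width $w$.
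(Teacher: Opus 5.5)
The paper gives no derivation of this corollary. It is asserted as an instance of \Cref{thm:duality}, and the tile count is supported only by attribution and by the measurement in \Cref{app:exp:scheduling} that speedups on synthetic sliding-window masks track $T/(cw)$. Your proposal therefore supplies the missing argument rather than an alternative route. Its exactness half is sound: the union $\tilde E$ of visited tiles is a display containing $E_\mu$, and passing from $E_\mu$ to $\tilde E$ is the $-\infty$ pad/unpad move of \Cref{lem:FRsmax} and \Cref{lem:maskinvadd}. Unvisited tiles are entirely $-\infty$ and drop out. The online log-sum-exp merge you use to stream over tiles is an extra declared algebra law in the paper's sense, not one of its lemmas. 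Phrasing Step~2 as enlarging the display $E_\mu\subseteq\tilde E$ is cleaner than ``factoring $\mathcal{A}$ through a coarser tube-function''.

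Your suspicion in Step~3 is correct, and the defect lies in the statement, not in your argument. The hypothesis constrains each row separately: it is invariant under permuting query rows, whereas the tile count depends on how rows are grouped into blocks. Concretely, take $c=1$, $w=B$, $T\ge B^2$, and let the $t$-th row of every query block see exactly the $t$-th key block. Each query block then visits $B$ tiles, $T$ in total, against the claimed $\mathcal{O}(Tw/B^2)=\mathcal{O}(T/B)$. Likewise, for $w<B$ the diagonal mask ($c=w=1$) already needs $T/B$ tiles, against the claimed $\mathcal{O}(T/B^2)$. If $B$ is held constant the bound is trivial, since every visited tile contains a visible entry and there are at most $Tw$ of those. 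The real content of the claim is its uniformity in $B$, and that is exactly what fails.

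So the corollary holds only with two added conditions. The first is your coherence condition: each interval's endpoints vary by $\mathcal{O}(B)$ across a query block. This holds for the sliding-window and sink-plus-window masks the paper cites and benchmarks. The second is $w\gtrsim B$, or else an additive $cT/B$ term in the bound. Stating both explicitly is the right fix.

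One refinement: under the coherence condition, fix any row $i_0$ of a block. The block-union of the $r$-th interval then lies in a window of length $(b_{i_0,r}-a_{i_0,r})+\mathcal{O}(B)$. Each block therefore visits $\mathcal{O}(w/B+c)$ tiles, and the total is $\mathcal{O}(Tw/B^2+cT/B)$. The factor $c$ on the main term of the stated bound is slack.
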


\section{Discussion and conclusion}
\label{sec:discussion}

We have presented a formal diagrammatic proof calculus for the structural layer of tensor programming. This is the layer manipulated by reshape/reorder/repeat/reduce/pack/unpack/pad patterns in einops-style code, ubiquitous in implementations of machine learning architectures. Prior ML diagram systems are useful as notation, communication devices, or compiler frontends, but they do not provide a formal graphical proof calculus for structural equality.

Our calculus formally bridges undirected tensor network notation, and directed computation graphs. Undirected tensor-network calculi are the right syntax for Kronecker-product structure, contraction topology, and Frobenius-basis reasoning. The structural layer of deep-learning programs is different. Batches, sequence axes, broadcasts, repeats, reshapes, packing, and padding are cartesian phenomena: they are about copying, deleting, and reindexing positions. Tubes make this cartesian structure visible while preserving the operational left-to-right reading of computation graphs. Their upper boundaries retain the familiar tensor-network view of axes, but the interior records the directed computation.

The main rewrite in our diagrams is grade-naturality. Pointwise maps do not care how the surrounding finite index set has been rearranged; hence the ``spectacles'' slide. This simple fact accounts for a surprisingly large class of tensor-program equalities. The examples in the paper are intentionally familiar at first: transformer permutation equivariance and key/value permutation invariance become one-line diagrammatic derivations. The mask-augment duality then shows that the same principle also explains a less obvious equivalence: a mask is a query-indexed display of visible key/value fibres, and masked attention is equal to unmasked attention computed fibrewise over that display.

In the appendix, we present several corollaries of this single rewrite.  These recover known architecture implementation patterns from the literature. These serve as validation targets, demonstrating the breadth of results captured in one succinct derivation.

The accompanying code transcription in torch with einops proves the same theorem using only explicit reshapes, packs, pads, unpads, broadcasts, masks, softmax-support changes, and reductions. It is verbose and difficult to audit locally. The diagrammatic proof compresses this derivation into typed local rewrites, each licensed by grade-naturality or an explicitly declared algebra/padding law. Our diagrams are thus more than a presentation layer, and constitute an ergonomic tool to compress useful proofs.

Several directions for further development and application of our contribution follow naturally. A small normaliser for the structural fragment (WIP) would turn the calculus into a practical checker for einops-style rewrites. Derivative tubes would extend the language to backpropagation while preserving the separation between structural reindexing and base arithmetic. Multi-head attention, multi-query
attention, tiled attention, and FlashAttention-style IO rewrites are natural next tests: each combines finite display structure with a small number of declared algebraic or systems-level invariants. More speculatively, the same wires/tubes split may be useful in mechanistic interpretability, where residual-stream read/write structure is often discussed diagrammatically but rarely given a formal axis semantics.

\bibliographystyle{plainnat}
\bibliography{refs}

@inproceedings{rogozhnikov2022einops,
  author    = {Rogozhnikov, Alex},
  title     = {Einops: Clear and Reliable Tensor Manipulations with Einstein-like Notation},
  booktitle = {International Conference on Learning Representations (ICLR)},
  year      = {2022}
}

@article{penrose1971applications,
  author = {Penrose, Roger},
  title  = {Applications of Negative Dimensional Tensors},
  journal= {Combinatorial Mathematics and its Applications},
  year   = {1971}
}

@misc{chiang2023named,
  author = {Chiang, David and Rush, Alexander M. and Barak, Boaz},
  title  = {Named Tensor Notation},
  year   = {2023},
  note   = {arXiv:2102.13196}
}

@misc{coecke2010interacting,
  author = {Coecke, Bob and Duncan, Ross},
  title  = {Interacting Quantum Observables: Categorical Algebra and Diagrammatics},
  year   = {2011},
  note   = {New Journal of Physics 13:043016}
}

@article{pavlovic2013new,
  author  = {Pavlovic, Du{\v s}ko and Coecke, Bob and Vicary, Jamie},
  title   = {A new description of orthogonal bases},
  journal = {Mathematical Structures in Computer Science},
  volume  = {23},
  number  = {3},
  pages   = {555--567},
  year    = {2013}
}

@article{fox1976coalgebras,
  author  = {Fox, Thomas},
  title   = {Coalgebras and Cartesian categories},
  journal = {Communications in Algebra},
  volume  = {4},
  number  = {7},
  pages   = {665--667},
  year    = {1976}
}

@misc{khatri2024anatomy,
  author = {Khatri, Nikhil and Laakkonen, Tuomas and Liu, Jonathon and Wang-Ma{\'s}cianica, Vincent},
  title  = {On the Anatomy of Attention},
  year   = {2024},
  note   = {arXiv:2407.02423}
}

@article{abbott2024ncd,
  author  = {Abbott, Vincent},
  title   = {Neural Circuit Diagrams: Robust Diagrams for the Communication, Implementation, and Analysis of Deep Learning Architectures},
  journal = {Transactions on Machine Learning Research},
  year    = {2024},
  note    = {arXiv:2402.05424}
}

@article{joyalstreet1991,
  author  = {Joyal, Andr{\'e} and Street, Ross},
  title   = {The geometry of tensor calculus, {I}},
  journal = {Advances in Mathematics},
  volume  = {88},
  number  = {1},
  pages   = {55--112},
  year    = {1991}
}

@misc{taylor2024graphical,
  author = {Taylor, Jordan K.},
  title  = {An introduction to graphical tensor notation for mechanistic interpretability},
  year   = {2024},
  note   = {arXiv:2402.01790}
}

@misc{ahle_tensorgrad,
  author = {Ahle, Thomas D.},
  title  = {The Tensor Cookbook and {\texttt{tensorgrad}}: machine learning with symbolic tensors},
  year   = {2024},
  note   = {Open-source library \url{https://github.com/thomasahle/tensorgrad} and textbook draft \url{https://tensorcookbook.com/}.}
}

@article{laue2020simple,
  author  = {Laue, S{\"o}ren and Mitterreiter, Matthias and Giesen, Joachim},
  title   = {A Simple and Efficient Tensor Calculus for Machine Learning},
  journal = {Fundamenta Informaticae},
  volume  = {177},
  number  = {2},
  pages   = {157--179},
  year    = {2020},
  note    = {Also AAAI 2020; arXiv:2010.03313}
}

@inproceedings{mellies2006functorial,
  author    = {Melli{\`e}s, Paul-Andr{\'e}},
  title     = {Functorial Boxes in String Diagrams},
  booktitle = {Computer Science Logic (CSL)},
  year      = {2006},
  pages     = {1--30},
  note      = {LNCS 4207}
}

@misc{moeller2020tube,
  author = {Moeller, Joe},
  title  = {Tube diagrams for monoidal monads},
  year   = {2020},
  note   = {Blog post, 9 July 2020, \url{https://joe-moeller.com/2020/07/09/tube-diagrams-for-monoidal-monads/}}
}

@inproceedings{haviv2022transformer,
  author    = {Haviv, Adi and Ram, Ori and Press, Ofir and Izsak, Peter and Levy, Omer},
  title     = {Transformer Language Models without Positional Encodings Still Learn Positional Information},
  booktitle = {Findings of the Association for Computational Linguistics (EMNLP)},
  year      = {2022},
  pages     = {1382--1390},
  note      = {arXiv:2203.16634}
}

@inproceedings{tsai2019transformer,
  author    = {Tsai, Yao-Hung Hubert and Bai, Shaojie and Yamada, Makoto and Morency, Louis-Philippe and Salakhutdinov, Ruslan},
  title     = {Transformer Dissection: An Unified Understanding for Transformer's Attention via the Lens of Kernel},
  booktitle = {Empirical Methods in Natural Language Processing (EMNLP-IJCNLP)},
  year      = {2019},
  pages     = {4343--4352}
}

@inproceedings{kwon2023vllm,
  author    = {Kwon, Woosuk and Li, Zhuohan and Zhuang, Siyuan and Sheng, Ying and
               Zheng, Lianmin and Yu, Cody Hao and Gonzalez, Joseph E. and Zhang, Hao and
               Stoica, Ion},
  title     = {Efficient Memory Management for Large Language Model Serving with {PagedAttention}},
  booktitle = {Proceedings of the 29th Symposium on Operating Systems Principles (SOSP)},
  year      = {2023}
}

@inproceedings{zheng2024sglang,
  author    = {Zheng, Lianmin and Yin, Liangsheng and Xie, Zhiqiang and Sun, Chuyue and
               Huang, Jeff and Yu, Cody Hao and Cao, Shiyi and Kozyrakis, Christos and
               Stoica, Ion and Gonzalez, Joseph E. and Barrett, Clark and Sheng, Ying},
  title     = {{SGLang}: Efficient Execution of Structured Language Model Programs},
  booktitle = {Advances in Neural Information Processing Systems (NeurIPS)},
  year      = {2024}
}

@inproceedings{juravsky2024hydragen,
  author    = {Juravsky, Jordan and Brown, Bradley and Ehrlich, Ryan and Fu, Daniel Y. and
               R{\'e}, Christopher and Mirhoseini, Azalia},
  title     = {{Hydragen}: High-Throughput {LLM} Inference with Shared Prefixes},
  booktitle = {International Conference on Machine Learning (ICML)},
  year      = {2024}
}

@inproceedings{yao2023tree,
  author    = {Yao, Shunyu and Yu, Dian and Zhao, Jeffrey and Shafran, Izhak and
               Griffiths, Thomas L. and Cao, Yuan and Narasimhan, Karthik},
  title     = {Tree of Thoughts: Deliberate Problem Solving with Large Language Models},
  booktitle = {Advances in Neural Information Processing Systems (NeurIPS)},
  year      = {2023}
}

@inproceedings{wang2023selfconsistency,
  author    = {Wang, Xuezhi and Wei, Jason and Schuurmans, Dale and Le, Quoc V. and
               Chi, Ed H. and Narang, Sharan and Chowdhery, Aakanksha and Zhou, Denny},
  title     = {Self-Consistency Improves Chain of Thought Reasoning in Language Models},
  booktitle = {International Conference on Learning Representations (ICLR)},
  year      = {2023}
}

@inproceedings{leviathan2023speculative,
  author    = {Leviathan, Yaniv and Kalman, Matan and Matias, Yossi},
  title     = {Fast Inference from Transformers via Speculative Decoding},
  booktitle = {International Conference on Machine Learning (ICML)},
  year      = {2023}
}

@misc{krell2021packing,
  author = {Krell, Mario Michael and Kosec, Matej and Perez, Sergio P. and Fitzgibbon, Andrew},
  title  = {Efficient Sequence Packing without Cross-contamination: Accelerating Large Language Models without Impacting Performance},
  year   = {2021},
  note   = {arXiv:2107.02027}
}

@misc{kosec2021packing,
  author = {Kosec, Matej and Fu, Sheng and Krell, Mario Michael},
  title  = {Packing: Towards 2x {NLP} {BERT} Acceleration},
  year   = {2021},
  note   = {arXiv:2107.02027v3 (companion blog post)}
}

@inproceedings{dao2022flashattention,
  author    = {Dao, Tri and Fu, Daniel Y. and Ermon, Stefano and Rudra, Atri and R{\'e}, Christopher},
  title     = {{FlashAttention}: Fast and Memory-Efficient Exact Attention with {IO}-Awareness},
  booktitle = {Advances in Neural Information Processing Systems (NeurIPS)},
  year      = {2022}
}

@inproceedings{dao2023flashattention2,
  author    = {Dao, Tri},
  title     = {{FlashAttention-2}: Faster Attention with Better Parallelism and Work Partitioning},
  booktitle = {International Conference on Learning Representations (ICLR)},
  year      = {2024},
  note      = {arXiv:2307.08691, 2023}
}

@misc{xformers,
  author = {Lefaudeux, Benjamin and Massa, Francisco and Liskovich, Diana and Xiong, Wenhan and
            Caggiano, Vittorio and Naren, Sean and Xu, Min and Hu, Jieru and Tintore, Marta and
            Zhang, Susan and Labatut, Patrick and Haziza, Daniel and Wehrstedt, Luca and
            Reizenstein, Jeremy and Sizov, Grigory},
  title  = {{xFormers}: A Modular and Hackable Transformer Modelling Library},
  year   = {2022},
  howpublished = {\url{https://github.com/facebookresearch/xformers}}
}

@misc{flexattention,
  author = {Dong, Juhan and Feng, Yanbo and He, Horace and Guessous, Driss and
            Liang, Yanli and Zhang, Joy Dong},
  title  = {{FlexAttention}: The Flexibility of {PyTorch} with the Performance of {FlashAttention}},
  year   = {2024},
  howpublished = {PyTorch Blog, August 2024}
}

@misc{child2019sparsetransformer,
  author = {Child, Rewon and Gray, Scott and Radford, Alec and Sutskever, Ilya},
  title  = {Generating Long Sequences with Sparse Transformers},
  year   = {2019},
  note   = {arXiv:1904.10509}
}

@misc{beltagy2020longformer,
  author = {Beltagy, Iz and Peters, Matthew E. and Cohan, Arman},
  title  = {{Longformer}: The Long-Document Transformer},
  year   = {2020},
  note   = {arXiv:2004.05150}
}

@inproceedings{zaheer2020bigbird,
  author    = {Zaheer, Manzil and Guruganesh, Guru and Dubey, Kumar Avinava and Ainslie, Joshua and
               Alberti, Chris and Onta{\~n}{\'o}n, Santiago and Pham, Philip and Ravula, Anirudh and
               Wang, Qifan and Yang, Li and Ahmed, Amr},
  title     = {Big Bird: Transformers for Longer Sequences},
  booktitle = {Advances in Neural Information Processing Systems (NeurIPS)},
  year      = {2020}
}

@misc{jiang2023mistral,
  author = {Jiang, Albert Q. and Sablayrolles, Alexandre and Mensch, Arthur and Bamford, Chris and
            Chaplot, Devendra Singh and de las Casas, Diego and Bressand, Florian and Lengyel, Gianna and
            Lample, Guillaume and Saulnier, Lucile and Lavaud, L{\'e}lio Renard and Lachaux, Marie-Anne and
            Stock, Pierre and Le Scao, Teven and Lavril, Thibaut and Wang, Thomas and Lacroix, Timoth{\'e}e and
            El Sayed, William},
  title  = {{Mistral 7B}},
  year   = {2023},
  note   = {arXiv:2310.06825}
}

@inproceedings{xiao2024streamingllm,
  author    = {Xiao, Guangxuan and Tian, Yuandong and Chen, Beidi and Han, Song and Lewis, Mike},
  title     = {Efficient Streaming Language Models with Attention Sinks},
  booktitle = {International Conference on Learning Representations (ICLR)},
  year      = {2024}
}

@misc{gemma2024,
  author = {{Gemma Team}, Google DeepMind},
  title  = {{Gemma 3}: Open Models with Local-Global Alternating Attention},
  year   = {2024},
  note   = {Technical report; sliding window 1024, 5:1 sliding/full alternation.}
}

@misc{wang2024flashmask,
  author = {Wang, Guoxia and Wei, Tengda and Lin, Hailang and Lin, Jiacheng and Zhao, Yuang and
            Liu, Junyuan and Wang, Liang and Wang, Dianhai and Liu, Yu and Wang, Haifeng},
  title  = {{FlashMask}: Efficient and Rich Mask Extension of {FlashAttention}},
  year   = {2024},
  note   = {arXiv:2410.01359}
}

@misc{mlxlm,
  author = {{MLX Team}, Apple},
  title  = {{MLX-LM}: Language Model Inference and Training with {MLX}},
  year   = {2024},
  howpublished = {\url{https://github.com/ml-explore/mlx-lm}}
}

@misc{lmstudiocommunity,
  author = {{LM Studio Community}},
  title  = {{LM Studio} community {GGUF} / {MLX} model packs (Hermes~4~70B,
            Gemma~4~31B, Gemma~4~26B-A4B, Llama~3.3~70B-Instruct, GPT-OSS~120B,
            Qwen3-Next 80B-A3B)},
  year   = {2024--2026},
  howpublished = {\url{https://huggingface.co/lmstudio-community}}
}

\appendix
\clearpage

\vspace*{\fill}
\section{Cheat sheet}
\label{app:primer}

\begin{table}[!ht]
\caption{\textbf{The primer (summary cheat sheet).}}
\label{tab:primer}
\centering
\small
\renewcommand{\arraystretch}{1.5}
\begin{tabular}{@{}>{\centering\arraybackslash}m{0.32\linewidth}m{0.30\linewidth}m{0.32\linewidth}@{}}
\toprule
\textbf{Diagram} & \textbf{Categorical content} & \textbf{\texttt{einops} / code reading} \\
\midrule
\scalebox{1}{\tikzfig{tubediagrams/elemwise}}
 & \emph{Spectacles.} Grade-naturality: $h$ slides past any grade. \emph{The rewrite engine.}
 & \texttt{f(rearrange(x, ...)) == rearrange(f(x), ...)}. \\
\midrule
\scalebox{1}{\tikzfig{matmuls/SIMDdisp}}
 & \emph{SIMD.} Apply $f$ pointwise along an axis.
 & \texttt{f(x)}. \\
\midrule
\scalebox{1}{\tikzfig{matmuls/reshapedisp}}
 & \emph{Reshape.} Cardinality iso $[mn]\cong[m]\times[n]$.
 & \texttt{rearrange(x, "(g\,h) c -> g h c")}. \\
\midrule
\scalebox{1}{\tikzfig{matmuls/reorderdisp}}
 & \emph{Braid (reorder).} Permutation on wire order.
 & \texttt{rearrange(x, "b h w c -> b c h w")}. \\
\midrule
\scalebox{1}{\tikzfig{matmuls/copytubedisp}}
 & \emph{Sock.} Copy comonoid: broadcast a new outer grade.
 & \texttt{repeat(x, "d -> b d", b=B)}. \\
\midrule
\scalebox{1}{\tikzfig{matmuls/addtubedisp}}
 & \emph{Anti-sock.} Reduce: many-to-one read $+$ base addition.
 & \texttt{reduce(x, "b d -> d", "sum")}. \\
\midrule
\scalebox{1}{\tikzfig{matmuls/bundledisp}}
 & \emph{Pants.} Cat: ordered concatenation.
 & \texttt{pack([x, y], "* d")}. \\
\midrule
\scalebox{1}{\tikzfig{matmuls/unbundledisp}}
 & \emph{Copants.} Split: ordered slice.
 & \texttt{unpack(z, [shape\_x, shape\_y], "* d")}. \\
\midrule
\scalebox{1}{\tikzfig{ML/towards-matrix-contraction/dot-productdisp}}
 & \emph{Dot product.} Sock $+$ pants $+$ anti-sock: tube closing.
 & \texttt{einsum("d, d ->", v, w)}. \\
\midrule
\scalebox{1}{\tikzfig{ML/towards-matrix-contraction/tensor-contrdisp}}
 & \emph{Matmul.} Dot-product tubed by an axis: $AB^\top$.
 & \texttt{einsum("m\,n, k\,n -> m\,k", A, B)}. \\
\midrule
\multicolumn{3}{c}{%
\begin{minipage}{0.95\linewidth}
\centering
\scalebox{1}{\tikzfig{ML/attnbase}}\\[0.5em]
\emph{Unmasked attention.}\\[0.25em]
\resizebox{\linewidth}{!}{\texttt{einsum("i j, j d -> i d", softmax(einsum("i d, j d -> i j", Q, K) / sqrt(d), dim=-1), V)}}
\end{minipage}%
} \\
\bottomrule
\end{tabular}
\end{table}
\vspace*{\fill}

\clearpage

\section{The graded monad: \texorpdfstring{$\FinSet^{\op}$}{FinSet\textasciicircum op} over \texorpdfstring{$\Euc$}{Euc}}
\label{app:gradedmonad}

The formal algebraic semantics of tubes cohering with string diagrams is a symmetric monoidal graded monad; these proofs assume familiarity with symmetric monoidal categories.

\begin{definition}[Symmetric monoidal graded monad]
\label{def:gradedT}
A graded monad consists of a \emph{grading category} $\mathcal{C}$ over a \emph{base category} $\mathcal{D}$, along with a functor $\mathbb{G}$ from the grading into the endofunctor category of $\mathcal{D}$, $\mathbb{G}: \mathcal{C} \rightarrow [\mathcal{D},\mathcal{D}]$. Endofunctor categories always carry monoidal structure $([\mathcal{D},\mathcal{D}],\circ,\mathsf{id}_{\mathcal{D}})$, where only some pairs of endofunctors $\mathbf{J},\mathbf{K}: \mathcal{D} \rightarrow \mathcal{D}$ may braid symmetric monoidally (in the presence of natural isomorphisms $\mathbf{J} \circ \mathbf{K} \Leftrightarrow \mathbf{K} \circ \mathbf{J}$). When the grading category $\mathcal{C}$ and the functor $\mathbb{G}$ are symmetric monoidal we have a \emph{Symmetric monoidal graded monad}.
\end{definition}

Throughout the body, we have used a $\FinSet^{\op}$ graded-monad over $\Euc$; we will handle the padded fragment $\FinPart^{\op}$ by a reduction to a $\FinSet^{\op}$-grading. $\FinSet^{\op}$ is the opposite of the category of finite sets and partial functions, with cartesian product as symmetric monoidal structure. $\Euc$ is the category of Euclidean spaces and arbitrary functions, with direct sum as symmetric monoidal structure. The connection between the algebraic categorical content and the diagrammatic notation is already established by Melli\`es' Functor Boxes \citeyearpar{mellies2006functorial}, where tubes are an evident extension to the symmetric monoidal case \citep{moeller2020tube}. Hence all we have to elaborate and verify is that we indeed have a symmetric monoidal graded monad.

We proceed in two steps. First, we construct and verify the $(\FinSet,\times,1)^{\op}$ grading over $([\Euc,\Euc],\circ,\id_{\Euc})$. Second, we demonstrate that padding bookkeeping of $\FinPart^{\op}$ is notational: a $\FinSet^{\op}$ grading suffices.

\subsection{\texorpdfstring{$\FinSet^{\op}$}{FinSet\textasciicircum op} graded monad over \texorpdfstring{$\Euc$}{Euc}}

We will proceed by describing the desired functor $\mathbb{G}$, and then we verify its required properties.

\paragraph{The object part of \texorpdfstring{$\mathbb{G}$}{G}}

The functor $\mathbb{G}: (\FinSet,\times,1)^{\op} \rightarrow ([\Euc,\Euc],\circ,\id_{\Euc})$ sends $\FinSet \ni X$ to the endofunctor $(-)^{\oplus |X|}: \Euc \rightarrow \Euc$. Equivalently, in the cartesian-base form most useful for the body's calculus, write $T_X(A) := A^X$ for the indexed-product object. The unit and product cases are then
\[
  T_1(A) \cong A,
  \qquad
  T_{X\times Y}(A) \cong T_X(T_Y(A)),
\]
which is the strong graded structure: the cartesian product on grades corresponds to the composition on endofunctors. Respecting our outside-in reading convention for tubing, the functor sends $X \times Y$ to the composite endofunctor $(-)^{\oplus |X|} \circ (-)^{\oplus |Y|}$. On morphisms, a finite read map $r\colon Y\to X$ in $\FinSet$ acts contravariantly as the reindexing
\[
  r^*\colon A^X \to A^Y,
  \qquad
  (r^* a)_y \;=\; a_{r(y)},
\]
recovering the display-map view of the body: the read map points opposite to data flow, sending each output index to the input index it reads from.

\paragraph{The morphism part of \texorpdfstring{$\mathbb{G}$}{G}}

In this section we must formally justify the display-map argument that relates copy-delete to functions between finite sets. We proceed by the following steps; first, we establish a helper lemma concerning $\Euc$ that renders its objects as positive integers; second, we establish a normal-form factorisation of functions between finite sets (viewable as integers) in terms of bijections, surjections, and injections; third, we establish a one-to-one correspondence between the normal form factorisation and composites of copies, deletes, and braidings in $\Euc$; fourth, we establish that the correspondence holds for tensor products, which are equivalently sequentially composed tupling endomorphisms, or cartesian products of finite sets in the grading category.

First, the intermediary lemma concerning $\Euc$, justifying our convention throughout of only using a single base wire $\mathbb{R}$.

\begin{lemma}[One wire type suffices for $\Euc$]
\label{cor:onewiretype}
Working in the ambient setting of (not necessarily smooth) functions between Euclidean spaces, only one wire type $\mathbb{R}$ is needed.
\begin{proof}
The direct sum is the categorical product of Euclidean spaces, which Fox's theorem licenses as the parallel composition in the presence of natural copy-delete maps. Since every Euclidean space is a direct sum of $\mathbb{R}$, we are done.
\end{proof}
\end{lemma}

Lemma \ref{cor:onewiretype} allows us to deal with just a single copy map $\delta: \mathbb{R} \rightarrow \mathbb{R} \oplus \mathbb{R}$ and a single delete map $\epsilon: \mathbb{R} \rightarrow \{\star\}$; the 0-dimensional space is isomorphic to the singleton set.

Second, the normal form. Where $n$ is a positive integer, let $[n]$ denote the set $\{0,1,...,(n-1)\}$; without loss of generality for the arguments to follow, we may consider finite sets of this form \footnote{We cannot in general \emph{only} consider finite sets of this form (the skeleton of $\FinSet^\op$) as we must also distinguish the data of disjoint unions and cartesian products, which are bookkept by isomorphisms in the ambient $\FinSet$. It is folklore that the free PROP generated by a cocommutative comonoid is precisely this skeleton $(\FinSet,+,0)^\op$; this folklore essentially settles the morphism part of the functor, but we provide a proof anyway.}

\begin{proposition}
\label{prop:functionnormalform}
Every function $f: [n] \rightarrow [m]$ admits a canonical factorisation of two bijections, a surjection, and an injection $[n] \leftrightarrow [n] \twoheadrightarrow [|\mathsf{Im}(f)|] \leftrightarrow [|\mathsf{Im}(f)|] \hookrightarrow [m]$, where the surjective and injective legs are isotone (order-respecting).
\begin{proof}
The surjection and injection arise for free from the usual epi-mono factorisation available in $\FinSet$. The bijections rearrange $[n]$ and $[|\mathsf{Im}(f)|]$ such that the fibres of the surjection and injection respectively are contiguous, which render the surjection and injection isotone whilst preserving composite equality with the original $f$.
\end{proof}
\end{proposition}

Third, the correspondence to copy-deletes.

\begin{proposition}
\label{prop:copydeletenormal}
Up to isomorphism, every morphism $g: \mathbb{R}^m \rightarrow \mathbb{R}^{n}$ that is formed of a composite of $\delta$, $\epsilon$, and braiding corresponds to a unique function $f: [n] \rightarrow [m]$.
\begin{proof}
By counitality, every $g$ is isomorphic to $g'$ where there are no $\epsilon$ sequentially composed with any leg of a $\delta$. By braid naturality and expanding/contracting identity maps we may organise all the $\epsilon$ to occur sequentially before all $\delta$: $g'$ is isomorphic to a braiding, followed by a tensor of the form $\id_{\mathbb{R}^x} \oplus (\epsilon)^y$, followed by a second braiding, followed by a tensored composite of $\delta$ (by coassociativity, the order of $\delta$ is immaterial). Up to an isomorphism in $\FinSet$ recasting coproducts (disjoint unions) as $[n]$, the $\delta$-block and $\epsilon$-block agree with coproduct factorisations of the desired surjection and injection respectively in $\FinSet^\op$ obtained via Proposition \ref{prop:functionnormalform}.
\end{proof}
\end{proposition}

Now we may state ($\times$-free) action of $\mathbb{G}$ on morphisms.

\begin{definition}[$\mathbb{G}$ on $\times$-free morphisms]
$\mathbb{G}$ sends finite functions $f: Y \rightarrow X$ in $\FinSet^\op$ to the function $\mathbb{G}(f): \mathbb{R}^{|X|} \rightarrow \mathbb{R}^{|Y|}$ in $\Euc$ composed of $\delta$, $\epsilon$ and braiding, given by Propositions \ref{prop:functionnormalform} and \ref{prop:copydeletenormal}, which guarantee well-definedness and uniqueness up to isomorphism.
\end{definition}

Finally, we address the behaviour of $\mathbb{G}$ on morphisms with respect to $\times$. For this it suffices to note that pairs of integers can be ordered lexicographically, which establish isomorphisms through which Propositions \ref{prop:functionnormalform} and \ref{prop:copydeletenormal} carry through.

\subsection{Completing the verification}

\begin{proposition}
$\mathbb{G}$ is (strong) symmetric monoidal.
\begin{proof}
We have two tasks; first we must establish the symmetric monoidality of $\mathbb{G}$; second we must verify naturality in the image $\mathbb{G}$. For the first task, the only point of interest is the specification that the braids $\gamma_{A,B}: B \times A \leftarrow A \times B$ of $\FinSet^\op$ are sent to the (\texttt{reorder}) natural isomorphisms $((-)^{|B|})^{|A|} \Leftrightarrow ((-)^{|A|})^{|B|}$ in $[\Euc,\Euc]$; from this the monoidality and Yang-Baxter equations evidently follow. For the second task, it suffices to observe that braids are definitionally natural transformations, and that by Fox's theorem and corollary \ref{cor:onewiretype} all morphisms in $\Euc$ are cohomomorphisms with respect to the $\delta$-$\epsilon$ comonoid.
\end{proof}
\end{proposition}

\begin{corollary}
\label{prop:semirigfunctor}
$\mathbb{G}$ is a semirig functor, additionally sending the coproduct of $\FinSet^\op$ to $(- \ \oplus =)$.
\begin{proof}
The same proof strategy above carries over to the monoidal structures $(\FinSet^\op,+,0)$ and $([\Euc,\Euc],\oplus,\Euc \rightarrow \mathbb{R}^0)$. The verification of the interaction (e.g. distributivity) of the monoidal products is routine.
\end{proof}
\end{corollary}

\subsection{Handling (un)pad}

Concretely in $\Euc$, the following is just the observation that we can explicitly introduce paddings $\star$ via a direct sum of copies of $\star$, at which point we are dealing with tensor-axes manipulation that falls within the $\FinSet^\op$ grading.

\paragraph{Explicit form.} Fix a padding interpretation $\bot_A \in A$ for each base object. A partial read map $r\colon Y\rightharpoonup X$ acts on $A^X$ as
\[
  (r^*_{\bot_A} a)_y \;=\;
  \begin{cases}
    a_{r(y)} & y \in \mathrm{dom}(r),\\
    \bot_A & y \notin \mathrm{dom}(r).
  \end{cases}
\]
This is exactly the dense action $r^*$ of the previous subsection, extended to undefined positions by reading from the chosen filler. Grade-naturality (\Cref{prop:slide}) then has a single compatibility condition: a base morphism $h\colon A\to B$ slides past a partial read $r$ iff
\[
  h(\bot_A) \;=\; \bot_B,
\]
i.e.\ $h$ preserves the chosen padding element. This is the body's ``compatible padding'' condition spelt out at the level of the indexed-product action.

\paragraph{Reduction to \texorpdfstring{$\FinSet^{\op}$}{FinSet\textasciicircum op}.}
Because we parameterise padding with a single element at a time, we can exploit two observations. First, because partial functions arise as the Kleisli category of the Maybe monad $(- + \bot)$, $r\colon A \rightharpoonup B$ are viewable as functions $\hat{r}\colon A \to B + \{\bot\}$, where elements in $A$ outside the defined domain of $r$ are mapped to a distinguished additional formal element $\bot$; the action of $\hat{r}$ on the preimage of $\bot$ corresponds to copying the padding element that may be introduced as an explicit direct-summed morphism in $\Euc$. Second, by Corollary \ref{prop:semirigfunctor}, we can handle coproducts in the grading category $\FinSet^\op$ algebraically, though we have opted in the body not to depict them graphically to keep tubing visually simple. To eliminate partial functions altogether, it suffices to note that Kleisli composition can be made explicit in the grading $\FinSet^\op$, holding aside a formal element $\bot$; although in practice, we do not come across instances in which padded data is subject to arbitrary transformations; usually they are absorbed or bookkept for removal.

\section{Metadata displays}
\label{app:metadata}

\noindent\emph{This appendix is optional reading.} The dense-padding proof of \Cref{thm:duality} (\Cref{sec:maskprep}) does not depend on it: the body uses only the $\FinSet^{\op}$ grading of \Cref{app:gradedmonad} and the dense-with-partial-reads presentation of \Cref{prop:packpad}. Metadata displays supply the alternative grading category used when one wants diagrams to remember the logical block decomposition of a packed buffer (rather than compiling it away into a dense partial read). Readers focused on the body theorems may skip this appendix; readers interested in non-contiguous packing, structured ragged batches, or block-aware rewriting will find the construction here.

\bigskip
\noindent The dense calculus grades by finite sets of tensor positions. For
ragged and packed tensors this is insufficient: two packed buffers may have the same
underlying finite set of cells while carrying different logical decompositions into
blocks. We therefore replace a bare finite set by a finite display
\[
  p:X\to B,
\]
where \(X\) is the finite set of dense cells and \(B\) is the finite set of logical
blocks. The fibre \(p^{-1}(b)\) is the block of cells belonging to \(b\).

The purpose of the construction is threefold. First, we define a category
\(\Disp(\Met)\) whose morphisms preserve this block metadata. Second, we check that
\(\Disp(\Met)\) is symmetric monoidal, so it can be used as a grading category for
tubed diagrams. Third, we relate it back to ordinary $\FinSet^\op$ grading by an adjunction:
forgetting the display recovers the dense set of cells, while the free embedding equips
a finite set with trivial display structure.

\subsection{The metadata category \texorpdfstring{\(\Met\)}{Met}}

Let \([n]=\{0,\ldots,n-1\}\), regarded as a finite total order. We define \(\Met\) to have as objects
the least class of finite posets containing all \([n]\) and closed under
\[
  P,Q\longmapsto P\times Q,
  \qquad
  P,Q\longmapsto P+Q,
  \qquad
  P,Q\longmapsto P+_{\ord}Q.
\]
Here \(P\times Q\) is the cartesian product (lexicographic) order, \(P+Q\) is disjoint union with no
new order relations between the summands, and \(P+_{\ord}Q\) is ordered disjoint union:
every element of \(P\) is placed below every element of \(Q\).

Morphisms in \(\Met\) are monotone maps. The intended readings are:
\[
\begin{array}{cc}
\textbf{metadata construction} & \textbf{tensor reading}\\
\hline
[n] & \text{ordinary dense axis of length }n\\
P\times Q & \text{rectangular product of axes}\\
P+Q & \text{unordered collection of regions}\\
P+_{\ord}Q & \text{contiguous concatenation of regions}
\end{array}
\]

The subcollection generated only by finite total orders and cartesian products is the
dense rectangular fragment. The additional coproduct-like operations are used only to
remember how a packed dense surface decomposes into logical pieces.

\subsection{Displays}

A metadata display is a monotone map
\[
  p:X\to B
\]
in \(\Met\). We call \(X\) the dense surface and \(B\) the block poset. The fibre
\(p^{-1}(b)\) is the logical piece of the packed object lying over \(b\).

\begin{example}[Dense axis]
The ordinary dense axis of length \(n\), with no nontrivial packing metadata, may be
represented either as the discrete display
\[
  \id_{[n]}:[n]\to[n],
\]
where every cell is its own block, or as the one-block display
\[
  !:[n]\to[1],
\]
where all cells form a single dense block. These two embeddings have different
universal properties below.
\end{example}

\begin{example}[Packed ragged batch]
Let
\[
  X=[2]+_{\ord}([3]\times[4])+_{\ord}[5].
\]
The map
\[
  p:X\to[3]
\]
sending each ordered summand to its corresponding element of \([3]\) represents a
packed ragged batch with three logical fibres of shapes \(2\), \(3\times4\), and \(5\).
The underlying dense buffer has cardinality
\[
  |X|=2+12+5=19,
\]
but the display remembers the three-block decomposition.
\end{example}

\subsection{The category \texorpdfstring{\(\Disp(\Met)\)}{Disp(Met)}}

The category \(\Disp(\Met)\) has displays \(p:X\to B\) as objects. A morphism
\[
  (a,b):(p:X\to B)\longrightarrow(q:Y\to D)
\]
is a commutative square in \(\Met\):
\[
\begin{tikzcd}
X \arrow[r,"a"] \arrow[d,"p"']
  & Y \arrow[d,"q"] \\
B \arrow[r,"b"']
  & D .
\end{tikzcd}
\]
Equivalently,
\[
  q\circ a=b\circ p.
\]
The map \(a\) sends dense cells to dense cells, while \(b\) sends source blocks to
target blocks. The commutativity condition says that cell movement respects the
recorded packing metadata. Where $b$ is a bijection, the logical blocks are conserved up to reordering. More generally, we allow the metadata of logical blocks to be rewritten.

Composition is by pasting commutative squares. Identities are identity squares. Hence
\(\Disp(\Met)\) is the arrow category of \(\Met\), restricted to the display objects
under consideration.

\begin{remark}[Relation to partial reads]
The definition above records packing metadata, not padding values. Partial reads and
padding fillers are additional structure used by the operational semantics. They may
be added by replacing the dense-cell map \(a\) with a partial map, or by passing to a
partial-map enrichment. The present appendix only constructs the metadata grading
category and its relation to ordinary finite-set grading.
\end{remark}

\subsection{Symmetric monoidal structure}

The monoidal product of displays is given componentwise:
\[
  (p:X\to B)\boxtimes(q:Y\to D)
  :=
  (p\times q:X\times Y\to B\times D).
\]
On morphisms,
\[
  (a,b)\boxtimes(c,d):=(a\times c,\; b\times d).
\]
The monoidal unit is
\[
  \mathbb{1}:=\id_{[1]}:[1]\to[1].
\]

\begin{proposition}
\(\Disp(\Met)\), equipped with \(\boxtimes\), is symmetric monoidal.
\end{proposition}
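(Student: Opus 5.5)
The plan is to inherit the symmetric monoidal structure of \(\Disp(\Met)\) from that of \(\Met\) under the cartesian product, because \(\Disp(\Met)\) is a full subcategory of the arrow category of \(\Met\), and \(\boxtimes\) is the pointwise product on that arrow category. The argument has three steps.

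\textbf{Step 1: \((\Met,\times,[1])\) is symmetric monoidal.} We must check closure and the structural maps.
\begin{itemize}
\item \(\Met\) is closed under \(\times\) by definition, and \([1]\) is an object.
\item For monotone \(f,g\), the map \(f\times g\) is monotone for the product order used in \(\Met\).
\item The associator \((P\times Q)\times R\cong P\times(Q\times R)\) and the unitors \([1]\times P\cong P\cong P\times[1]\) are the evident set bijections. They are monotone with monotone inverses, and they are natural.
\item The symmetry \(\gamma_{P,Q}\colon P\times Q\to Q\times P\), \((x,y)\mapsto(y,x)\), must be an order isomorphism. This holds if \(\times\) carries the componentwise product order. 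It fails for the lexicographic order: already \([2]\times[2]\) lexicographic is a chain, and the swap is not monotone on it.
\end{itemize}
The symmetry is the main obstacle. The paper's gloss says ``(lexicographic)'', so I would first fix the reading that the monoidal product in \(\Met\) uses the componentwise (product-poset) order, which is the cartesian product in \(\mathsf{Poset}\). Lexicographic order would then be treated only as a choice of enumeration when flattening to a dense buffer, as in the body's lexicographic remark for \(\mathbb{G}\). Under this reading \(\times\) is the categorical product in posets, so \((\Met,\times,[1])\) is cartesian monoidal, hence symmetric monoidal. Pentagon, triangle, hexagon and \(\gamma^2=\id\) then hold automatically, because each instance is an equation between maps into a product, which are determined by their projections.

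\textbf{Step 2: lift to the arrow category.} For any symmetric monoidal category \(\mathcal{M}\), the arrow category \(\mathcal{M}^{\to}\) is symmetric monoidal with pointwise product.
\begin{itemize}
\item If \((a,b)\) and \((c,d)\) are commuting squares, then \((p'\times q')(a\times c)=(b\times d)(p\times q)\) by functoriality of \(\times\). So \(\boxtimes\) is well defined on morphisms, and bifunctoriality is inherited componentwise.
\item The structural isomorphisms of \(\mathcal{M}^{\to}\) are the pairs of structural isomorphisms of \(\mathcal{M}\), top and bottom. Each pair forms a commuting square by naturality of the associator, unitors and symmetry in \(\mathcal{M}\), applied to the arrows \(p,q,r\).
\item All coherence diagrams in \(\mathcal{M}^{\to}\) commute because they commute componentwise.
\end{itemize}

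\textbf{Step 3: restrict to displays.} Displays are closed under \(\boxtimes\), since \(p\times q\) is again a monotone map between objects of \(\Met\). The unit \(\id_{[1]}\) is a display. Hence \(\Disp(\Met)\) is a full symmetric monoidal subcategory of \(\Met^{\to}\), which gives the proposition.
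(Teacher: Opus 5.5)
Your proposal is correct and follows the same route as the paper. The paper also takes the associators, unitors and symmetry from the finite products of \(\Met\) and applies them simultaneously to dense surfaces and block posets. It gets each square's commutativity from naturality and inherits pentagon, triangle and hexagon levelwise. Your caveat about the order is justified and sharper than the paper. Its claim that ``\(\Met\) has finite products'' and its use of \(\sigma_{X,Y}\) as a morphism only make sense for the componentwise order. Under the stated lexicographic order the swap on \([2]\times[2]\) is not monotone. Moreover \(([1]+[1])\times[2]\not\cong[2]\times([1]+[1])\) as posets, so under that reading no symmetry exists at all.
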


\begin{proof}
Since \(\Met\) has finite products, the cartesian product supplies associators,
unitors, and symmetries:
\[
  (X\times Y)\times Z \cong X\times(Y\times Z),
  \qquad
  X\times[1]\cong X,
  \qquad
  X\times Y\cong Y\times X.
\]
Applying these isomorphisms simultaneously on dense surfaces and block posets gives
the corresponding structure maps in \(\Disp(\Met)\). For example, the symmetry
\[
  (p:X\to B)\boxtimes(q:Y\to D)
  \longrightarrow
  (q:Y\to D)\boxtimes(p:X\to B)
\]
is the square
\[
\begin{tikzcd}
X\times Y \arrow[r,"\sigma_{X,Y}"] \arrow[d,"p\times q"']
  & Y\times X \arrow[d,"q\times p"] \\
B\times D \arrow[r,"\sigma_{B,D}"']
  & D\times B .
\end{tikzcd}
\]
The square commutes by naturality of the cartesian symmetry. The associator and unitors
are identical componentwise arguments. The pentagon, triangle, and hexagon coherence
conditions hold because they hold in \(\Met\) for cartesian product, and
\(\Disp(\Met)\) inherits them levelwise.
\end{proof}

Thus \(\Disp(\Met)\) is eligible to grade the tube calculus: tensoring grades corresponds
to taking product structure both on the dense cells and on their displayed metadata.

\subsection{Forgetting and freely adding display structure}

We may characterise the relationship between the plain $\FinSet^\op$ grading and the metadata variant by a pair of adjunctions.

Let
\[
  U:\Disp(\Met)\to\FinSet
\]
be the forgetful functor sending a display \(p:X\to B\) to the underlying finite set
\(|X|\) of dense cells. On a morphism
\[
  (a,b):(p:X\to B)\to(q:Y\to D),
\]
\(U\) sends \((a,b)\) to the underlying finite function
\[
  |a|:|X|\to|Y|.
\]

There are two canonical ways to regard a finite set as a display.

First, the discrete display functor
\[
  \Delta:\FinSet\to\Disp(\Met)
\]
sends a finite set \(S\) to
\[
  \Delta S := \id_S:S\to S.
\]
This equips \(S\) with the finest possible trivial metadata: every cell is its own
block.

Second, the codiscrete or one-block display functor
\[
  \nabla:\FinSet\to\Disp(\Met)
\]
sends \(S\) to
\[
  \nabla S := !:S\to[1].
\]
This equips \(S\) with the coarsest possible trivial metadata: all cells lie in a single
block.

\begin{proposition}
There are adjunctions
\[
  \Delta \dashv U \dashv \nabla .
\]
\end{proposition}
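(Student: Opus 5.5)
The plan is to verify both adjunctions directly as natural bijections of hom-sets, using the fact that $\Disp(\Met)$ is an arrow category, so a morphism is a pair $(a,b)$ with $q\circ a=b\circ p$. For each adjunction I will write down the hom-set bijection, check that it is natural in both variables, and then read off the unit and counit as a sanity check. Before starting I need one convention, since $\Delta$ and $\nabla$ must land in $\Met$. I will regard a finite set $S$ as the discrete poset $\coprod_{s\in S}[1]$, which is a $\Met$-object because $\Met$ is closed under $+$. Monotone maps \emph{out of} a discrete poset are then arbitrary functions.

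\textbf{Step 1: $\Delta\dashv U$.} Take a display $p:X\to B$. A morphism $\Delta S=(\id_S:S\to S)\to(p:X\to B)$ is a square $(a,b)$ with $p\circ a=b\circ\id_S=b$. So $b$ is forced to equal $p\circ a$, and the square is determined by $a$ alone. Since $S$ is discrete, $a$ ranges over all functions $S\to|X|$. This gives
\[
  \Disp(\Met)(\Delta S,p)\;\cong\;\FinSet(S,U p),\qquad (a,p\circ a)\mapsto |a|.
\]
Naturality in $S$ is precomposition on both components of the square. Naturality in $p$ is postcomposition with a square $(c,d)$, using $q\circ c=d\circ p$. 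The unit is $\id:S\to U\Delta S$. The counit at $p$ is the square $(\id_X,p):\Delta|X|\to p$. This last formula is well typed only if $|X|$ is again read as a discrete poset, and that is consistent with the convention above.

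\textbf{Step 2: $U\dashv\nabla$.} A morphism $(p:X\to B)\to\nabla S=(!:S\to[1])$ is a pair $(a,!_B)$. The bottom map into the terminal $[1]$ is unique, and the square commutes automatically because $[1]$ is terminal. So these squares correspond to maps $a:X\to S$ in $\Met$, which I want to match with functions $|X|\to S$. Naturality is again pre- and postcomposition.

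\textbf{Main obstacle.} The delicate point is in Step 2. The map $a$ must be \emph{monotone}. If $\nabla S$ is given the discrete order, $a$ must be constant on the order-connected components of $X$, and the bijection with all functions $|X|\to S$ fails; for instance, $X=[2]$ admits no monotone surjection onto a discrete $2$-element set. The choice of order on $\nabla S$ is therefore the whole content of this adjunction. My first fix is to give $\nabla S$ the codiscrete (chaotic) order, in which every map into it is monotone. This requires allowing finite preorders into $\Met$, or at least admitting codiscrete objects alongside the generators. The alternative fix is to read $U$ and $\nabla$ as acting on the dense-cell component with arbitrary functions, so that the metadata lives only on the block component; this matches the remark that the dense-cell map can be relaxed. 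Whichever repair I adopt, I will check that $\Delta\dashv U$ is unaffected, since discrete sources still map monotonically everywhere. I will then record the resulting triple $\Delta\dashv U\dashv\nabla$ with $U\Delta=U\nabla=\id_{\FinSet}$. This exhibits plain $\FinSet^{\op}$ grading as a retract of the metadata grading.
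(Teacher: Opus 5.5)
Your argument for $\Delta\dashv U$ is the paper's: the bottom leg is forced to be $p\circ a$. The paper never says which order $S$ carries, and its bijection tacitly needs every function $S\to X$ to be monotone, so your discrete convention is what actually makes that half correct.

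For $U\dashv\nabla$, the obstacle you flag is an error in the paper's proof, not a gap in yours. The paper asserts that squares $p\to\nabla S$ ``are exactly finite functions $a\colon X\to S$'' and never imposes monotonicity of $a$. The situation is worse than the discrete choice failing: $U$ has no right adjoint at all on $\Disp(\Met)$ as defined. Suppose a display $R=(r\colon Y\to D)$ represented $\FinSet(U(-),S)$ with $|S|\ge 2$. Testing against $\id_{[1]}$ gives $|Y|=|S|$. Testing against $!\colon[2]\to[1]$ counts pairs $y_0\le y_1$ in $Y$ with $r(y_0)=r(y_1)$. Antisymmetry allows at most $|Y|+\binom{|Y|}{2}<|Y|^2=|S|^2$ such pairs, a contradiction. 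So the right half of the proposition is false as printed, and a change of category such as yours is unavoidable. Once that change is made, the paper's one-line argument goes through.

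Of your two repairs, the first is the better one, and it can be made conceptual. The functor $\mathrm{dom}\colon\Disp(\Met)\to\Met$ always has left adjoint $X\mapsto\id_X$ and right adjoint $X\mapsto(!\colon X\to[1])$. So $\Delta\dashv U\dashv\nabla$ is this triple composed with the discrete $\dashv$ forgetful $\dashv$ chaotic triple between finite preorders and $\FinSet$. The paper's $\Met$ lacks exactly the last of these adjoints, because it has no chaotic objects.

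The second repair changes the morphisms. The remark you cite concerns partial dense-cell maps, not non-monotone ones, so it does not license this change. Worse, once the dense-cell leg may be arbitrary, the order on the dense surface becomes invisible up to isomorphism. For example, $!\colon[2]\to[1]$ and $!\colon[1]+[1]\to[1]$ become isomorphic displays. That erases the $+$ versus $+_{\ord}$ distinction the appendix exists to record.

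Whichever repair you choose, state the corrected proposition explicitly, since what you prove is not the statement as written.
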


\begin{proof}
For the left adjunction, let \(S\in\FinSet\) and let \(p:X\to B\) be a display. A morphism
\[
  \Delta S=(\id_S:S\to S)\longrightarrow(p:X\to B)
\]
is a commutative square
\[
\begin{tikzcd}
S \arrow[r,"a"] \arrow[d,"\id_S"']
  & X \arrow[d,"p"] \\
S \arrow[r,"b"']
  & B .
\end{tikzcd}
\]
The commutativity condition is \(p\circ a=b\). Hence \(b\) is uniquely determined by
\(a\). Therefore
\[
  \Disp(\Met)(\Delta S,p)\cong \FinSet(S,U p),
\]
naturally in \(S\) and \(p\). Thus \(\Delta\dashv U\).

For the right adjunction, a morphism
\[
  (p:X\to B)\longrightarrow \nabla S=(!:S\to[1])
\]
is a commutative square
\[
\begin{tikzcd}
X \arrow[r,"a"] \arrow[d,"p"']
  & S \arrow[d,"!"] \\
B \arrow[r,"!"']
  & [1] .
\end{tikzcd}
\]
The bottom map is unique, and the square commutes automatically. Hence such morphisms
are exactly finite functions \(a:X\to S\), giving a natural bijection
\[
  \Disp(\Met)(p,\nabla S)\cong \FinSet(U p,S).
\]
Thus \(U\dashv\nabla\).
\end{proof}

\section{Mask-augment duality: helper lemmas and detailed proof}
\label{app:maskprepdetail}

This appendix records the four helper lemmas that license individual transitions in the eleven-frame chain proving \Cref{thm:duality}, and reproduces the chain itself with each step's licence stated alongside it. The lemma proofs are routine pointwise verifications and are deferred for space.

\begin{lemma}[Augment $\Leftrightarrow$ pad/unpad]
\label{lem:augpad}
There exists a \texttt{pad}/\texttt{unpad} pair of functions (varying together in the padding parameter) such that the augmented K/V tube $\mathcal{A}_M(\cdot)$ equals a dense \texttt{pad} followed by the corresponding \texttt{unpad}.
\end{lemma}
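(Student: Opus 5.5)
The approach is to make the augmented K/V tube explicit as a read map, then exhibit a dense \texttt{pad} and a compatible \texttt{unpad} whose composite has exactly that read. By \Cref{thm:duality}, the augmentation $\mathcal{A}_M$ is the reindexing $s^*$ along the total function $s\colon E_\mu\to k$, $s(i,j)=j$, where $E_\mu\cong\bigsqcup_i \mu(i)$. So $\mathcal{A}_M(K)_{(i,j)} = K_j$, and likewise for $V$. This is a repeat-sock (copy along $p$) followed by a selection of visible pairs.

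\textbf{Factoring the read map.} I would factor $s$ through the dense bounding shape $B=k\times k$. Take the inclusion $\iota\colon E_\mu\hookrightarrow k\times k$ and the projection $\pi_2\colon k\times k\to k$, so that $s=\pi_2\circ\iota$.

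\textbf{Reading off the two maps.} Applying the contravariant action of \Cref{app:gradedmonad} gives $s^* = \iota^*\circ\pi_2^*$.
\begin{itemize}
\item $\pi_2^*$ is the dense broadcast $K\mapsto(K_j)_{(i,j)\in k\times k}$. This is the \emph{pad} stage: a total dense map into $B$, namely the repeat-sock of \Cref{sec:contentful}.
\item $\iota^*$ is the restriction to $E_\mu$. Regarded the other way round, $\iota$ presents $E_\mu$ inside $B$ with padded complement $B\setminus E_\mu$.
\end{itemize}

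\textbf{Matching \Cref{prop:packpad}.} To get the pad/unpad pair in the literal form of \Cref{prop:packpad}, I would do two things. First, use the partial read $\iota^{-1}\colon B\rightharpoonup E_\mu$ (defined exactly on $E_\mu$) with filler $\bot$ to define \texttt{pad}, which writes $\bot$ on the masked cells. Second, define \texttt{unpad} as the partial inverse. \Cref{prop:packpad} then gives $\texttt{unpad}\circ\texttt{pad}=\id$. Composing with the broadcast shows that $\mathcal{A}_M$ equals (broadcast, then pad-to-$B$ with filler $\bot$, then unpad). This composite is independent of $\bot$, which is the sense in which the pair varies together in the padding parameter.

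\textbf{Pointwise check.} Finally I would verify the identity pointwise. At $(i,j)\in E_\mu$ both sides read $K_j$, and the cells off $E_\mu$ are never read after \texttt{unpad}. Each query fibre $p^{-1}(i)$ then has to be matched with $\mu(i)$, using $\mathcal{A}^{-1}(i)=\mu(i)$. Rows with $\mu(i)=\varnothing$ give empty fibres, so I would treat them separately as the default case.

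\textbf{Main obstacle.} I expect the hardest step to be the bookkeeping of the ordering and identification $E_\mu\cong\bigsqcup_{i}\mu(i)$ as a subobject of $k\times k$. This is needed so that the pad/unpad pair is an honest instance of \Cref{prop:packpad}, with the contiguous or lexicographic layout, rather than holding only up to an unspecified bijection. I would handle it by fixing the lexicographic order on $k\times k$ and ordering $E_\mu$ by restriction. Any residual reordering is then a braid, which commutes with everything by \Cref{prop:slide}.
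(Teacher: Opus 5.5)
Your proposal is correct and takes the paper's route: the paper proves this lemma only as a definitional step, and its Step~1 code transcription, $\texttt{unpad}\circ\texttt{repeat}$, is exactly your factorisation $s=\pi_2\circ\iota$. That factorisation gives $\mathcal{A}_M=\iota^*\circ\pi_2^*$, with $\pi_2^*$ the repeat-sock over the query axis and $\iota^*$ the unpad onto $E_\mu$. One bookkeeping slip: as literally written, ``broadcast, then pad-to-$B$, then unpad'' does not typecheck, since your $\texttt{pad}_\bot$ has domain $A^{E_\mu}$ while the broadcast already lands in $A^{k\times k}$; the coherent version is $\iota^*\circ(\texttt{pad}_\bot\circ\iota^*)\circ\pi_2^*$ (mask with $\bot$, then restrict), which reduces to $\iota^*\circ\pi_2^*$ by $\iota^*\circ\texttt{pad}_\bot=\id$ and is therefore independent of $\bot$, as you intended.
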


\paragraph{The chain in detail.}
The chain reproduces \Cref{sec:maskprep}, displayed full-size; each equality is licensed by the lemma or move discussed inline.

\[ \tikzfig{maskprep/augment0} = \tikzfig{maskprep/augment1} \]
The first step follows definitionally from the padding.
\[ = \tikzfig{maskprep/augment2} = \tikzfig{maskprep/augment3} \]
These two steps follow from diagram-manipulation. Copy-maps nested inside copants allow copying of indexing-morphisms, and since tubings are string-diagrams in $\FinSet^\op$ (which has copy and delete), we may use counitality for the second equation.
\[ = \tikzfig{maskprep/augment4} \]
This step introducing a $0$-padding comes from the following lemma, from the middle frame to the right.
\begin{lemma}
\label{lem:FR2}
\[\tikzfig{lemmas/FR2}\]
\begin{proof}
The first equality results from straightforward application of Lemma \ref{lem:FR}. The second equality may be proven by observing that the initial socks in both diagram create \textcolor{Blue}{$m$} and \textcolor{Orange}{$n$} copies of the same input data, and the \texttt{unpad} in the second diagram selects a subset of size \textcolor{Blue}{$m$}, from copies, guaranteeing that the resulting matrix contains the same entries.
\end{proof}
\end{lemma}

\[ = \tikzfig{maskprep/augment5}\]
This equation comes from the following interaction between padding and products.

\begin{lemma}
\label{lem:pants1}
\[\tikzfig{lemmas/pants1}\]
\begin{proof}
The pants match padded indexes in both input tensors, feeding these in to $f$, producing $f(u, v)$.
\end{proof}
\end{lemma}

\[= \tikzfig{maskprep/augment6}\]

This equation (introducing a new pad-unpad pair) arises by the following definitional property of pad-unpad.

\begin{lemma}
\label{lem:FR}
\[\tikzfig{lemmas/FR}\]
\begin{proof}
See the definition of the \texttt{pad} and \texttt{unpad}, and their interaction defined in \Cref{prop:packpad}.
\end{proof}
\end{lemma}

\[ = \tikzfig{maskprep/augment7} \]

This step relies on padding-transformation properties of softmax:

\begin{lemma}
\label{lem:FRsmax}
\[\tikzfig{lemmas/FRsmax}\]
\begin{proof}
This lemma falls out of the fact that $e^{-\infty} = 0$, and padding an array with $-\infty$ preserve the softmax output of the existing array, sending the padded elements to 0.
\begin{align*}
  \sigma(\vec{x} || -\infty \cdots -\infty) = \sigma(\vec{x}) || 0 \cdots 0.
\end{align*}

\end{proof}
\end{lemma}

\[ = \tikzfig{maskprep/augment8} \]

This step relies on addition being invariant to 0-padding.
\begin{lemma}
\label{lem:maskinvadd}
\[\tikzfig{lemmas/maskinvadd}\]
\begin{proof}
The sum of an array is invariant under padding with $0$s.
\begin{align*}
  \sum \vec{x}|| 0 \cdots 0 = \sum \vec{x}
\end{align*}
\end{proof}
\end{lemma}

\[ = \tikzfig{maskprep/augment9} \]

This step relies on an interaction law between paddings and general binary operations across tensors.

\begin{lemma}
\label{lem:RFmult}
\[\tikzfig{lemmas/RFmult}\]
\begin{proof}
Using \Cref{lem:pants1}, it is possible to push the pad on the output backwards through the product, producing a $\texttt{pad}_0$ on one input, and an arbitrary \texttt{pad} on the other. The pad value here maybe left unspecified, since $\forall u. 0 \cdot u = 0$.
\end{proof}
\end{lemma}

\[ = \tikzfig{maskprep/augment10}. \]

The final step is another interaction law of softmax with padding.

\begin{lemma}
\label{lem:FRsmax2}
\[\tikzfig{lemmas/FRsmax2}\]
\begin{proof}
  The equivalence is a consequence of the following known property of the softmax function.
\begin{align*}
  \sigma(\vec{x} || -\infty \cdots -\infty) = \sigma(\vec{x}) || 0 \cdots 0.
\end{align*}
Inputs to softmax may be padded with $-\infty$ leaving the output unchanged, and sending the padding to 0. This is a standard trick used to implement masked self-attention.
\end{proof}
\end{lemma}

\clearpage

\section{Mask-augment duality: Comparison with code}
\label{app:maskprepdiagrams}

Below we repeat the eleven frames of the derivation, each paired with the corresponding \texttt{forward} function, implemented in \texttt{torch + einops}. The code transcription proves the same identity without diagrams. Its length is the point: the graphical proof is a typed compression of a structural tensor-program derivation, not an informal illustration of it.

\noindent\begin{tabular}{@{}>{\centering\arraybackslash}p{0.42\textwidth}|>{\centering\arraybackslash}p{0.55\textwidth}@{}}
\textbf{Diagram} & \textbf{torch + einops} \\
\end{tabular}
\par\vspace{0.4em}

\begin{lrbox}{\stepcodebox}\begin{minipage}{0.55\textwidth}
\begin{lstlisting}
# Step 0, augmented version of attention
def forward(self, x):
    x_q, bx_kv = x, self.augment(x)
    q = self.w_q(x_q)
    k = self.w_k(bx_kv)
    v = self.w_v(bx_kv)
    scores = einsum(q, k, 'i d, i j d -> i j') * self.scale
    attn = scores.softmax(-1)
    out = einsum(attn, v, 'i j, i j d -> i d')
    return self.proj(out)
\end{lstlisting}
\end{minipage}\end{lrbox}%
\noindent\begin{tabular}{@{}>{\centering\arraybackslash}p{0.42\textwidth}|p{0.55\textwidth}@{}}
\resizebox{\linewidth}{!}{\tikzfig{maskprep/augment0}} & \usebox{\stepcodebox} \\
\end{tabular}
\par\vspace{1.2em}

\begin{lrbox}{\stepcodebox}\begin{minipage}{0.55\textwidth}
\begin{lstlisting}
# Step 1, prep := unpad . repeat
def forward(self, x):
    x_q, bx_kv = x, self.unpad(repeat(x, 'j d -> i j d', i=x.shape[0]), -INF)
    q = self.w_q(x_q)
    k = self.w_k(bx_kv)
    v = self.w_v(bx_kv)
    scores = einsum(q, k, 'i d, i j d -> i j') * self.scale
    attn = scores.softmax(-1)
    out = einsum(attn, v, 'i j, i j d -> i d')
    return self.proj(out)
\end{lstlisting}
\end{minipage}\end{lrbox}%
\noindent\begin{tabular}{@{}>{\centering\arraybackslash}p{0.42\textwidth}|p{0.55\textwidth}@{}}
\resizebox{\linewidth}{!}{\tikzfig{maskprep/augment1}} & \usebox{\stepcodebox} \\
\end{tabular}
\par\vspace{1.2em}

\begin{lrbox}{\stepcodebox}\begin{minipage}{0.55\textwidth}
\begin{lstlisting}
# Step 2, pushing unpad.repeat through copy
def forward(self, x):
    x_q = x
    bx_k = self.unpad(repeat(x, 'j d -> i j d', i=x.shape[0]), -INF)
    bx_v = self.unpad(repeat(x, 'j d -> i j d', i=x.shape[0]), -INF)
    q = self.w_q(x_q)
    k = self.w_k(bx_k)
    v = self.w_v(bx_v)
    scores = einsum(q, k, 'i d, i j d -> i j') * self.scale
    attn = scores.softmax(-1)
    out = einsum(attn, v, 'i j, i j d -> i d')
    return self.proj(out)
\end{lstlisting}
\end{minipage}\end{lrbox}%
\noindent\begin{tabular}{@{}>{\centering\arraybackslash}p{0.42\textwidth}|p{0.55\textwidth}@{}}
\resizebox{\linewidth}{!}{\tikzfig{maskprep/augment2}} & \usebox{\stepcodebox} \\
\end{tabular}
\par\vspace{1.2em}

\begin{lrbox}{\stepcodebox}\begin{minipage}{0.55\textwidth}
\begin{lstlisting}
# Step 3, introduce new axis on q
def forward(self, x):
    x_q, x_v = x, x
    bx_k = self.unpad(repeat(x, 'j d -> i j d', i=x.shape[0]), -INF)
    q = repeat(x_q, 'i d -> i j d', j=bx_k.shape[1])
    q = self.w_q(q)
    k = self.w_k(bx_k)
    v = self.w_v(x_v)
    bx_v = self.unpad(repeat(v, 'j d -> i j d', i=v.shape[0]), -INF)
    scores = einsum(q, k, 'i j d, i j d -> i j') * self.scale
    attn = scores.softmax(-1)
    out = einsum(attn, bx_v, 'i j, i j d -> i d')
    return self.proj(out)
\end{lstlisting}
\end{minipage}\end{lrbox}%
\noindent\begin{tabular}{@{}>{\centering\arraybackslash}p{0.42\textwidth}|p{0.55\textwidth}@{}}
\resizebox{\linewidth}{!}{\tikzfig{maskprep/augment3}} & \usebox{\stepcodebox} \\
\end{tabular}
\par\vspace{1.2em}

\begin{lrbox}{\stepcodebox}\begin{minipage}{0.55\textwidth}
\begin{lstlisting}
# Step 4, unpad q
def forward(self, x):
    x_q, x_v = x, x
    bx_k = self.unpad(repeat(x, 'j d -> i j d', i=x.shape[0]), -INF)
    q = self.unpad(repeat(x_q, 'i d -> i j d', j=x.shape[0]), 1)
    q = self.w_q(q)
    k = self.w_k(bx_k)
    v = self.w_v(x_v)
    bx_v = self.unpad(repeat(v, 'j d -> i j d', i=v.shape[0]), -INF)
    scores = einsum(q, k, 'i j d, i j d -> i j') * self.scale
    attn = scores.softmax(-1)
    out = einsum(attn, bx_v, 'i j, i j d -> i d')
    return self.proj(out)
\end{lstlisting}
\end{minipage}\end{lrbox}%
\noindent\begin{tabular}{@{}>{\centering\arraybackslash}p{0.42\textwidth}|p{0.55\textwidth}@{}}
\resizebox{\linewidth}{!}{\tikzfig{maskprep/augment4}} & \usebox{\stepcodebox} \\
\end{tabular}
\par\vspace{1.2em}

\begin{lrbox}{\stepcodebox}\begin{minipage}{0.55\textwidth}
\begin{lstlisting}
# Step 5, push unpad through the einsum
def forward(self, x):
    x_q, x_v = x, x
    bx_k = repeat(x, 'j d -> i j d', i=x.shape[0])
    q = repeat(x_q, 'i d -> i j d', j=x.shape[0])
    q = self.w_q(q)
    k = self.w_k(bx_k)
    v = self.w_v(x_v)
    bx_v = self.unpad(repeat(v, 'j d -> i j d', i=v.shape[0]), -INF)
    scores = einsum(q, k, 'i j d, i j d -> i j') * self.scale
    scores = self.unpad(scores, -INF)
    attn = scores.softmax(-1)
    out = einsum(attn, bx_v, 'i j, i j d -> i d')
    return self.proj(out)
\end{lstlisting}
\end{minipage}\end{lrbox}%
\noindent\begin{tabular}{@{}>{\centering\arraybackslash}p{0.42\textwidth}|p{0.55\textwidth}@{}}
\resizebox{\linewidth}{!}{\tikzfig{maskprep/augment5}} & \usebox{\stepcodebox} \\
\end{tabular}
\par\vspace{1.2em}

\begin{lrbox}{\stepcodebox}\begin{minipage}{0.55\textwidth}
\begin{lstlisting}
# Step 6, introduce pad then unpad identity
def forward(self, x):
    x_q, x_v = x, x
    bx_k = repeat(x, 'j d -> i j d', i=x.shape[0])
    q = repeat(x_q, 'i d -> i j d', j=x.shape[0])
    q = self.w_q(q)
    k = self.w_k(bx_k)
    v = self.w_v(x_v)
    bx_v = self.unpad(repeat(v, 'j d -> i j d', i=v.shape[0]), -INF)
    scores = einsum(q, k, 'i j d, i j d -> i j') * self.scale
    scores = self.unpad(scores, -INF)
    scores = self.pad(scores, -INF)
    scores = self.unpad(scores, -INF)
    attn = scores.softmax(-1)
    out = einsum(attn, bx_v, 'i j, i j d -> i d')
    return self.proj(out)
\end{lstlisting}
\end{minipage}\end{lrbox}%
\noindent\begin{tabular}{@{}>{\centering\arraybackslash}p{0.42\textwidth}|p{0.55\textwidth}@{}}
\resizebox{\linewidth}{!}{\tikzfig{maskprep/augment6}} & \usebox{\stepcodebox} \\
\end{tabular}
\par\vspace{1.2em}

\begin{lrbox}{\stepcodebox}\begin{minipage}{0.55\textwidth}
\begin{lstlisting}
# Step 7, push unpad through the softmax
def forward(self, x):
    x_q, x_v = x, x
    bx_k = repeat(x, 'j d -> i j d', i=x.shape[0])
    q = repeat(x_q, 'i d -> i j d', j=x.shape[0])
    q = self.w_q(q)
    k = self.w_k(bx_k)
    v = self.w_v(x_v)
    bx_v = self.unpad(repeat(v, 'j d -> i j d', i=v.shape[0]), -INF)
    scores = einsum(q, k, 'i j d, i j d -> i j') * self.scale
    scores = self.unpad(scores, -INF)
    scores = self.pad(scores, -INF)
    attn = scores.softmax(-1)
    attn = self.unpad(attn, 0)
    out = einsum(attn, bx_v, 'i j, i j d -> i d')
    return self.proj(out)
\end{lstlisting}
\end{minipage}\end{lrbox}%
\noindent\begin{tabular}{@{}>{\centering\arraybackslash}p{0.42\textwidth}|p{0.55\textwidth}@{}}
\resizebox{\linewidth}{!}{\tikzfig{maskprep/augment7}} & \usebox{\stepcodebox} \\
\end{tabular}
\par\vspace{1.2em}

\begin{lrbox}{\stepcodebox}\begin{minipage}{0.55\textwidth}
\begin{lstlisting}
# Step 8, pad before final sum
def forward(self, x):
    x_q, x_v = x, x
    bx_k = repeat(x, 'j d -> i j d', i=x.shape[0])
    q = repeat(x_q, 'i d -> i j d', j=x.shape[0])
    q = self.w_q(q)
    k = self.w_k(bx_k)
    v = self.w_v(x_v)
    bx_v = self.unpad(repeat(v, 'j d -> i j d', i=v.shape[0]), -INF)
    scores = einsum(q, k, 'i j d, i j d -> i j') * self.scale
    scores = self.unpad(scores, -INF)
    scores = self.pad(scores, -INF)
    attn = scores.softmax(-1)
    attn = self.unpad(attn, 0)
    out = einsum(attn, bx_v, 'i j, i j d -> i d j')
    out = self.pad(out, 0)
    out = out.sum(-1)
    return self.proj(out)
\end{lstlisting}
\end{minipage}\end{lrbox}%
\noindent\begin{tabular}{@{}>{\centering\arraybackslash}p{0.42\textwidth}|p{0.55\textwidth}@{}}
\resizebox{\linewidth}{!}{\tikzfig{maskprep/augment8}} & \usebox{\stepcodebox} \\
\end{tabular}
\par\vspace{1.2em}

\begin{lrbox}{\stepcodebox}\begin{minipage}{0.55\textwidth}
\begin{lstlisting}
# Step 9, eliminate v unpad and final pad
def forward(self, x):
    x_q, x_v = x, x
    bx_k = repeat(x, 'j d -> i j d', i=x.shape[0])
    q = repeat(x_q, 'i d -> i j d', j=x.shape[0])
    q = self.w_q(q)
    k = self.w_k(bx_k)
    v = self.w_v(x_v)
    bx_v = repeat(v, 'j d -> i j d', i=v.shape[0])
    scores = einsum(q, k, 'i j d, i j d -> i j') * self.scale
    scores = self.unpad(scores, -INF)
    scores = self.pad(scores, -INF)
    attn = scores.softmax(-1)
    attn = self.unpad(attn, 0)
    attn = self.pad(attn, 0)
    out = einsum(attn, bx_v, 'i j, i j d -> i d j')
    out = out.sum(-1)
    return self.proj(out)
\end{lstlisting}
\end{minipage}\end{lrbox}%
\noindent\begin{tabular}{@{}>{\centering\arraybackslash}p{0.42\textwidth}|p{0.55\textwidth}@{}}
\resizebox{\linewidth}{!}{\tikzfig{maskprep/augment9}} & \usebox{\stepcodebox} \\
\end{tabular}
\par\vspace{1.2em}

\begin{lrbox}{\stepcodebox}\begin{minipage}{0.55\textwidth}
\begin{lstlisting}
# Step 10, masked attention
def forward(self, x):
    x_q, x_v = x, x
    bx_k = repeat(x, 'j d -> i j d', i=x.shape[0])
    q = repeat(x_q, 'i d -> i j d', j=x.shape[0])
    q = self.w_q(q)
    k = self.w_k(bx_k)
    v = self.w_v(x_v)
    scores = einsum(q, k, 'i j d, i j d -> i j') * self.scale

    # MASK
    scores = self.unpad(scores, -INF)
    scores = self.pad(scores, -INF)

    attn = scores.softmax(-1)
    out = einsum(attn, v, 'i j, j d -> i d')
    return self.proj(out)
\end{lstlisting}
\end{minipage}\end{lrbox}%
\noindent\begin{tabular}{@{}>{\centering\arraybackslash}p{0.42\textwidth}|p{0.55\textwidth}@{}}
\resizebox{\linewidth}{!}{\tikzfig{maskprep/augment10}} & \usebox{\stepcodebox} \\
\end{tabular}
\par\vspace{1.2em}

\clearpage


\section{Three systems optimisations as corollaries}
\label{app:experiments}

The three corollaries below recover known systems optimisations as instances of the fibrewise decomposition in \Cref{thm:duality}. Prefix and DAG hoisting, packed-document compaction, and bounded-component scheduling come from disparate threads of the systems literature and were engineered independently; the duality places them on a common footing and yields a quantitative speedup prediction for each. We attribute the corollaries to their known instances and validate the predictions on a single Apple M4 Max (128\,GB unified memory). The following corollaries are not priority claims over the cited systems optimisations. They are validation targets: independently discovered efficient implementations should be recovered by any structural theory that correctly explains attention masks.

\subsection{Corollary 1: hoisting common fibres}
\label{app:exp:hoisting}

\paragraph{Known instances.}
For $n$ queries sharing a contiguous left-prefix, prompt-prefix caching computes the prefix's K/V once and reuses it: PagedAttention with prefix sharing in vLLM \citep{kwon2023vllm}, RadixAttention in SGLang \citep{zheng2024sglang}, and llama.cpp's prompt cache. For queries sharing an interior segment (tree-of-thoughts \citep{yao2023tree}, self-consistency sampling \citep{wang2023selfconsistency}, shared-context RAG), Hydragen \citep{juravsky2024hydragen} runs varying suffixes against the shared K/V; speculative-decoding tree caches do the same \citep{leviathan2023speculative}.

\paragraph{Validation.}
Hermes~4~70B (MLX 4-bit \citep{lmstudiocommunity}) via \texttt{mlx-lm} \citep{mlxlm}. Strict-prefix sweep: shared prompt of length $P$, $n$ candidate suffixes of length $s$, predicted speedup $n(P+s)^2 / (P^2 + 2nPs + ns^2)$ over the grid $P{\in}\{1024,2048\}$, $n{\in}\{2,4,8\}$; $\arg\max$-of-best-of-$n$ parity throughout, $6.82\times$ measured at $P{=}1024,\,s{=}64,\,n{=}8$ against $4.45\times$ predicted (\Cref{fig:summary}, left). DAG bench (root, $m$ intermediates, $k$ leaves; $R{=}256,\,I{=}32,\,L{=}16,\,m{=}4,\,k{=}4$): $1.20\times$ above strict-prefix, $4.08\times$ above naive.

\subsection{Corollary 2: compacting disjoint fibres into dense slabs}
\label{app:exp:compaction}

\paragraph{Known instances.}
Sequence packing concatenates short documents into a fixed-length training sequence with a block-diagonal mask \citep{krell2021packing,kosec2021packing}. The same pattern is the canonical use of FlashAttention's variable-length API \citep{dao2022flashattention,dao2023flashattention2}, xFormers' \texttt{BlockDiagonalMask} \citep{xformers}, and FlexAttention's \texttt{block\_diag} mask\_mod \citep{flexattention}.

\paragraph{Validation.}
Synthetic Q/K/V at $T\in\{4096,8192,16384\}$, document counts $n_d\in\{4,8,16\}$, average visible density $\rho = \sum_i \ell_i^2 / (\sum_i \ell_i)^2$. Predicted speedup $\approx 1/\rho$; best cell $9.25\times$ measured against $10.06\times$ predicted at $\rho\approx 0.099$, parity $\max|\Delta|\le 4\times 10^{-3}$ at fp16 (\Cref{fig:summary}, middle). A whole-model patch on Hermes~4~70B's \texttt{mlx\_lm.models.llama} attention (\texttt{exp02\_compaction/hermes\_mlx\_patch.py}) reproduces the pattern with bit-exact parity on a single full-causal segment.

\subsection{Corollary 3: bounded-component masks compile to row-state machines}
\label{app:exp:scheduling}

\paragraph{Known instances.}
Sliding-window attention restricts each query to a bounded neighbourhood: Sparse~Transformer \citep{child2019sparsetransformer}, Longformer \citep{beltagy2020longformer}, BigBird \citep{zaheer2020bigbird}, Mistral~7B \citep{jiang2023mistral}, alternating local-global layers in Gemma~3/4 \citep{gemma2024,lmstudiocommunity}, GPT-OSS~120B \citep{lmstudiocommunity}, Streaming-LLM \citep{xiao2024streamingllm}. Kernel-side, FlashAttention-2 supports custom masks \citep{dao2023flashattention2}; FlexAttention compiles \texttt{score\_mod}/\texttt{mask\_mod} into a \texttt{BlockMask} \citep{flexattention}; FlashMask records each row's visible columns as an interval list \citep{wang2024flashmask}.

\paragraph{Validation.}
Two paths share one \texttt{compile\_program}: measured on synthetic SWA at varying $T$ and window $w$; predicted on four GGUF frontier models (Gemma~4 31B/26B-A4B, Llama~3.3 70B, GPT-OSS 120B) by reading sliding-window metadata from each GGUF header. Measured tracks $T/(c\,w)$ over $T/w\in[4,128]$ at $73$--$95\%$ of theoretical maximum; the gap is host-side gather/scatter. At $T{=}8192$: $32\times$ per sliding layer for GPT-OSS 120B (window 128, 18/36 sliding); $7.1\times$ for Gemma~4 31B (window 1024, 50/60 sliding). Whole-model effective speedup scales by sliding-layer fraction (\Cref{fig:summary}, right).

\begin{figure}[!ht]
\centering
\includegraphics[width=\linewidth]{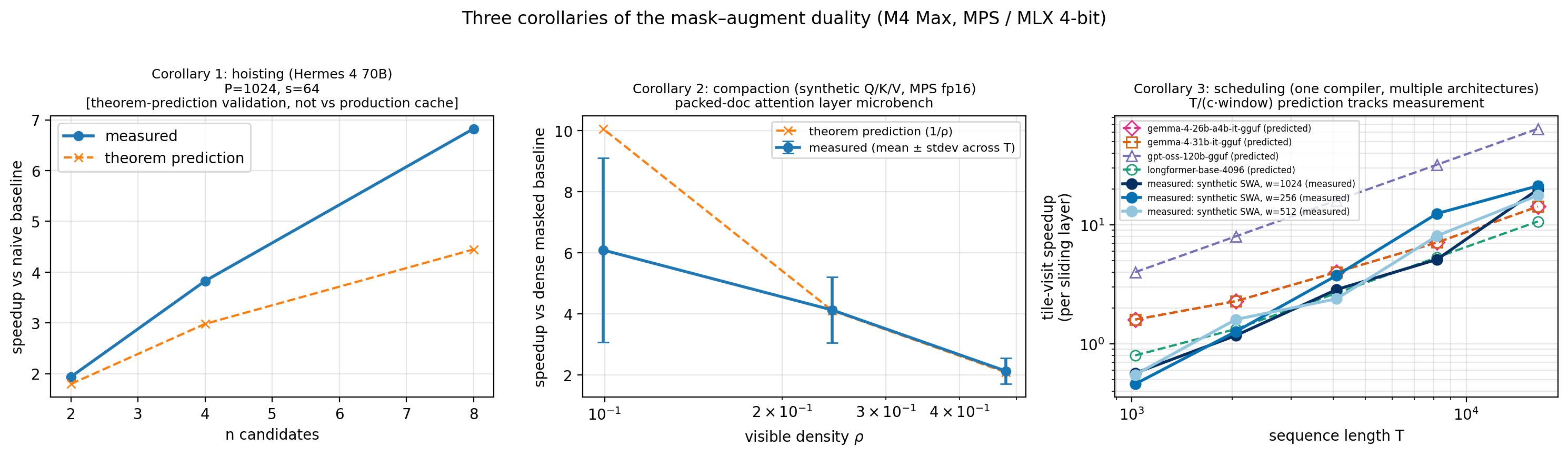}
\caption[Three corollaries of the mask-augment duality]{\textbf{Three corollaries of the mask-augment duality.} Left: Corollary~1, strict-prefix hoisting on Hermes~4~70B. Middle: Corollary~2, packed-document compaction on synthetic Q/K/V at fp16 (mean $\pm$ stdev across three $T$ values per $n_d$). Right: Corollary~3, bounded-component scheduling: predicted (open markers, dashed) on four GGUF frontier models from architectural metadata, measured (filled, solid) on synthetic SWA at three window sizes.}
\label{fig:summary}
\end{figure}

\subsection{Methodology and limitations}
\label{app:exp:methodology}

\begin{itemize}
\item Corollary 1's strict-prefix sweep is theorem-prediction validation, not a competitive benchmark; production stacks \citep{kwon2023vllm,zheng2024sglang} already implement strict-prefix caching. The genuine gap is the DAG case.
\item Corollary 3's measured anchor is synthetic SWA. Local Mistral-7B-v0.1 and Longformer weights were absent or partially incompatible (Longformer's split global-token projections break a unified-path patch); the predicted-side reads published architectural metadata.
\item Hermes~4~70B's parity drift is 4-bit-bounded ($\sim 0.5$--$1.0$ nats per cell on log-prob sums of 64--256 tokens). The $\arg\max$-of-best-of-$n$ parity claim holds throughout; per-token claims would tighten to $\sim 10^{-2}$ nats at fp16.
\item No comparison against custom block-sparse kernels on non-Apple silicon. Lowering compiled displays to FlexAttention or FlashMask on CUDA is the natural follow-up; the harness emits the abstract \texttt{Segment}/\texttt{RowState} objects such a lowering would consume.
\end{itemize}

\end{document}